\newcommand{\note}[1]{{\color{red}#1}}
\def\bi{\begin{itemize}}
\def\ei{\end{itemize}}
\def\calX{\mathcal{X}}
\def\calbfX{\pmb{\mathcal{X}}}
\def\calY{\mathcal{Y}}
\def\bR{\mathbb{R}}
\def\bE{\mathbb{E}}
\def\bfX{\pmb{X}}	
\def\bfx{\pmb{x}}	
\def\bfz{\pmb{z}}	
\def\CoarseAny{\text{\texttt{COARSE-ATTRIB}}\xspace}
\def\FineAny{\text{\texttt{FINE-ATTRIB}}\xspace}
\newtheorem{theorem}{Theorem}
\title{Explaining the root causes of unit-level changes}
\author{%
	Kailash Budhathoki \\
	\And
	George Michailidis\\
	\And
	Dominik Janzing\\
	\AND
	Amazon\\
	\texttt{\{kaibud,michaig,janzind\}@amazon.com}
}
\begin{document}
	\maketitle
	
	\begin{abstract}
		Existing methods of explainable AI and interpretable ML cannot explain change in the values of an output variable for a statistical unit in terms of the change in the input values and the change in the ``mechanism'' (the function transforming input to output). We propose two methods based on counterfactuals for explaining unit-level changes at various input granularities using the concept of Shapley values from game theory. 
		These methods satisfy two key axioms desirable for any unit-level change attribution method. Through simulations, we study the reliability and the scalability of the proposed methods. We get sensible results from a case study on identifying the drivers of the change in the earnings for individuals in the US.
	\end{abstract}

	\section{Introduction}\label{section:introduction}
Changes in the values of an output variable for a statistical unit (e.g., individual) are common when the unit's characteristics change or the system in which those variables are measured changes. 
In finance, a credit scoring model may predict a high credit worthiness score for an individual who had a very low score just a year ago. It could be that the system was upgraded with a new model or the individual simply accumulated a lot of wealth within a year. To be able to take corrective actions---if needed---when we observe a change in the observed target value for a unit, we need to understand what \emph{caused} the change in the first place.

In recent years, several techniques have been developed in interpretable ML (or explainable AI) literature to explain the predicted value in the foreground for a unit relative to some background value, e.g., prediction for another unit~\citep{lundberg:2017:shap,singal:2021:xai-rsv,sundararajan:2017:xai-axioms, frye:2020:xai-asymmetric-shapley,ribeiro:2016:xai-lime, ribeiro:2018:xai-anchors,shrikumar:2017:xai-DeepLIFT,heskes:2020:xai-causal-shap}. These methods can explain the change in the output values in terms of the change in the values of each input variable. But they cannot explain the change in terms of the change in the models, simply because they assume that the model is fixed. The models in ML are just a special case of ``mechanisms''.
 
Formally, our problem setup is as follows. For a unit, suppose a deterministic mechanism $f$ transforms the input vector $\bfx$ to an output value $y$. We observe the output value $y^{(1)}$ in the background, and $y^{(2)}$ in the foreground. We assume that deterministic mechanisms $f^{(k)}$ map respective input $\bfx^{(k)}$ to output values $y^{(k)}$, with $k=1, 2$.  Our goal is to quantify how much of the output change $y^{(2)} - y^{(1)}$ is driven by \textbf{1/} the change in the mechanism ($f^{(1)} \to f^{(2)}$), and \textbf{2/} the change in the input vector ($\bfx^{(1)} \to \bfx^{(2)}$) or further break it down to the change in the value of each input variable ($x_j^{(1)} \to x_j^{(2)}$). 

Our main contributions are as follows.
\textbf{1/} We first axiomatize desirable properties of an attribution method for explaining unit-level changes (\cref{section:axioms}).
\textbf{2/} We then propose two methods based on counterfactuals for attributing unit-level changes at various input granularities using the concept of Shapley values from game theory (\cref{section:explain}). In particular, the first method attributes a unit-level change to the change in the input vector and the change in the mechanism from the background to the foreground scenario (\cref{sec:coarse-any}). This coarse-grained attribution can be broken down to a finer detail up to the change in the value of each input variable for linear mechanisms (\cref{sec:fine-linear}). To obtain a fine-grained attribution for non-linear mechanisms, we modify the coarse-grained attribution method to include the change in the value of each input variable (\cref{sec:fine-any}). We discuss their properties in \cref{section:properties}.

We compare related work in \cref{section:related-work}. Through simulations, we study the reliability and scalability of the methods in \cref{subsec:simulations}. Then we present a real-world case study identifying the drivers of the change in the average earnings of US individuals reported through surveys in 1976 and 1982, and discuss the results for an exemplar individual. Finally, we conclude with a discussion on the uncertainty quantification and the limitations of our approach in \cref{section:discussion}. All proofs are in the appendix. We also present an attribution method when one demands explanations of unit-level changes w.r.t. causal mechanisms of variables in the real-world that are stochastic in the Appendix. 

%
	\section{Problem Definition}\label{section:problem}
%

Let $\bfX \coloneqq (X_1, \dotsc, X_d)$ denote the input variables (e.g., income, marital status) to a deterministic mechanism $f$ that produces the output variable $Y$ (e.g., credit score). 
We will use $\calbfX$ to denote the space of input values, and $\calY$ for the space of output values. Typically, $\calbfX=\bR^m$ and $\calY=\bR$. 
In the context of ML, the mechanism $f$ can be a prediction model.
Formally, we make the following assumption:

\textbf{Assumption A1/} A deterministic algorithm $f:\calbfX \to \calY$ produces output $Y$ from inputs $\bfX$. 


The output $Y$ produced by $f$ can differ from the actual variable $\tilde{Y}$ in the real-world. In particular, the inputs $\bfX$ \emph{cause} the output $Y$ w.r.t. the algorithm $f$. In a causal language, inputs $\bfX$ are the root nodes with outgoing arrows pointing to $Y$. The causal relationships between variables $X_1, \dotsc, X_d, Y$ in the real-world become irrelevant here as the output $Y$ results from applying the algorithm $f$ on inputs $X$~\citep{janzing:2020:feature}. We also refer to $f$ as the mechanism that generates $Y$ from $\bfX$.

For a statistical unit (e.g., a person), suppose that we observe the output value $y^{(1)}$ in the background (e.g., \emph{score} in 1976) and $y^{(2)}$ in the foreground (e.g., \emph{score} in 1982). The background and foreground scenarios can be two snapshots of time. 
We want to explain the drivers of the change in the output value of the unit, i.e., $\Delta y \coloneqq y^{(2)} - y^{(1)}$.
Let us rewrite $\Delta y$ in terms of input values and mechanisms:
\begin{align}
	\Delta y \coloneqq f^{(2)}\left(\bfx^{(2)}\right) - f^{(1)}\left(\bfx^{(1)}\right) .
\end{align}
Observe that the changes in the input ($\bfx^{(1)} \to \bfx^{(2)}$) and the mechanism ($f^{(1)} \to f^{(2)}$) drive $\Delta y$. Our goal is thus to attribute $\Delta y$ to \textbf{1/} the mechanism $f$, and \textbf{2/} the inputs $\bfX$ overall or break it down to each input variable $X_j$.
By working with deterministic mechanisms $f$, we keep the framework general for a broad range of AI/ML applications. In the context of explainable AI or interpretable ML, we can explain the drivers of the change in the predicted values between two scenarios. 

\section{Desirable Properties}\label{section:axioms}
Next we discuss desirable properties of an attribution method for explaining unit-level changes.

\textbf{Axiom 1/ Dummy:} An attribution method for explaining unit-level changes satisfies Dummy property if the causal driver gets a zero attribution if its value does not change from the foreground scenario to the background scenario. In particular, the Dummy property implies the following:
\textbf{A/} the mechanism $f$ gets a zero attribution if the foreground and background mechanisms are equivalent, i.e., $f^{(2)}(\bfx) = f^{(1)}(\bfx)$ for all $\bfx \in \calbfX$, and
\textbf{B/} the overall input $\bfX$ gets a zero attribution if the foreground and background input vectors are the same, i.e., $\bfx^{(2)} = \bfx^{(1)}$, or any input variable $X_j$ gets a zero attribution if the foreground and background values of $X_j$ are the same, i.e., $x_{j}^{(2)} = x_{j}^{(1)}$.

The Dummy property reflects the semantics that a cause does not contribute to the change in the effect, if the cause did not change. For example, it is plausible that the education level of individuals is one of the causes of their wage; if the education level of an individual does not change between two years but the wage changes, then education level does not contribute to the wage-change. This axiom formalizes the Dummy axiom of Shapley values~\citep{shapley:1953:solution} in the context of unit-level changes.

\textbf{Axiom 2/ Completeness:} An attribution method for explaining unit-level changes satisfies Completeness property if the attributions add up to the unit-level output change $\Delta y$. Such a linear decomposition is desirable because then the attributions are interpretable to humans. This axiom formalizes the Efficiency axiom of Shapley values~\citep{shapley:1953:solution} for unit-level changes.


\section{Attributing unit-level changes}\label{section:explain}
First we present our method to attribute the output change $\Delta y$ for a unit $u$ to the mechanism $f$ and the overall input $\bfX$ in \cref{sec:coarse-any}. For the special case of linear mechanisms, we derive a closed-form solution in \cref{sec:fine-linear}. Through the decomposition of that closed-form solution, we also obtain a fine-grained attribution to each input variable $X_j$. Finally, to obtain a fine-grained attribution for non-linear mechanisms, we introduce our second method in \cref{sec:fine-any}. Whenever we mention the causes of $\Delta y$, we interchangeably use the scalar placeholder $x_j$ for $X_j$ and the vector placeholder $\bfx$ for $\bfX$.


\subsection{Coarse-grained attribution for \emph{any} type of mechanism}\label{sec:coarse-any}
Here we attribute the output change $\Delta y$ to the mechanism $f$ and the inputs $\bfX$. We do not break down the attribution to $\bfX$ to each input variable $X_j$. In particular, the set of causes of $\Delta y$ is $\{f, \bfx\} \eqqcolon W$. Here, we are agnostic to the type of mechanism $f$.

Observe that we can obtain the foreground output value $y^{(2)} \coloneqq f^{(2)}(\bfx^{(2)})$ from the background value $y^{(1)} \coloneqq f^{(1)} (\bfx^{(1)})$ by gradually replacing the input vector $\bfx$ and mechanism $f$ to their foreground values one by one. Each time we replace a cause by its foreground value, we can compute its contribution as the change in the output value relative to the output value obtained by replacements before. In particular, we can compute the contribution of a cause $w \in W$ given that we have already replaced causes in subset $A \subset W$ to their foreground values before as 
\begin{align}
	C(w \mid A; W) &\coloneqq f^{\left(\mathbf{2}_{A \cup \{w\}}(f)\right)} \left(\bfx^{\left(\mathbf{2}_{A \cup \{w\}}(\bfx)\right)}\right) - 
	 f^{\left(\mathbf{2}_{A}(f)\right)} \left(\bfx^{\left(\mathbf{2}_{A}(\bfx)\right)}\right), \label{eq:contrib-coarse}
\end{align}
where $\mathbf{2}_A$ is an indicator function of subset $A$ that tells whether a cause $v \in W$ assumes its foreground value, indexed by (2), or its background value, indexed by (1), based on the causes in $A$, i.e.,
\begin{align}
	\mathbf{2}_A(v) \coloneqq 
	\begin{cases}
		2 \text{ if } v \in A,\\
		1 \text{ if } v \notin A.
	\end{cases}
\end{align}
But the contribution depends on the subset $A$ given as context.
For any ordering $W_{\sigma(1)}, \dotsc, W_{\sigma(|W|)}$ of causes $W$, we could consider the contribution of $W_{\sigma(j)}$ given $W^{\mathit{ctx}}_{\sigma(j)} = \{W_{\sigma(1)}, \dotsc, W_{\sigma(j-1)}\}$ as the context. This dependence on ordering introduces arbitrariness in the attribution procedure.
%

To get rid of the arbitrariness, we leverage Shapley values~\citep{shapley:1953:solution} from cooperative game theory. The key idea of Shapley values is to symmetrize over all orderings, i.e.,
consider all possible orderings, compute the contribution for each ordering, and then take the average.

Using the concept of Shapley values, the contribution of a cause $w \in W$ to the output change $\Delta y$ is then given by the average contribution over all possible orderings, i.e.,
\begin{align}
	\pi(w) \coloneqq \frac{1}{|W|!} \sum_{\sigma} C(w \mid W^{\mathit{ctx}}_{\sigma(j)}; W) = \sum_{A \subset W \setminus \{w\}} \frac{1}{|W| \binom{|W|-1}{|A|}} C(w \mid A; W),
\end{align}
where the second summation follows from averaging contributions for permutations of $W$ with the same value of $A$ before $w$ in the ordering.

As we only have two causes (i.e., $|W|=2$), we can derive a closed-form solution with a simple algebraic manipulation. In particular, the Shapley value contribution of the mechanism $f$ equals:
\begin{align} 
	\pi({f}) &\coloneqq \frac{1}{2}\left\{ f^{(2)}\big(\bfx^{(1)}\big) - f^{(1)}\big(\bfx^{(1)}\big) \right\} +
	\frac{1}{2}\left\{ f^{(2)}\big(\bfx^{(2)}\big) - f^{(1)}\big(\bfx^{(2)}\big) \right\} \label{eq:sv-f}
\end{align}

In words, the contribution of $f$ to $\Delta y$ is the average of the output changes (using $f^{(1)}$ vs $f^{(2)}$) for the background and foreground input vectors separately.

Similarly, the Shapley value contribution of the overall input $\bfX$ equals:
\begin{align}
	\pi(\bfx) &\coloneqq \frac{1}{2}\left\{ f^{(1)}\big(\bfx^{(2)}\big) - f^{(1)}\big(\bfx^{(1)}\big) \right\} + 
	\frac{1}{2}\left\{ f^{(2)}\big(\bfx^{(2)}\big) - f^{(2)}\big(\bfx^{(1)}\big) \right\} \label{eq:sv-x}
\end{align}
 In words, the contribution of inputs $\bfX$ to $\Delta y$ is the average of the output changes for the foreground and background input vectors using the background and foreground mechanisms separately.
 
\textbf{Counterfactual interpretation.} In the expression of Shapley values $\pi({f})$ and $\pi(\bfX)$ above, besides two factual terms $f^{(1)}\big(\bfx^{(1)}\big)$ and $f^{(2)}\big(\bfx^{(2)}\big)$, we also have two counterfactual terms $f^{(2)}\big(\bfx^{(1)}\big)$ and $f^{(1)}\big(\bfx^{(2)}\big)$. Thus, the attributions $\pi(f)$ and $\pi(\bfX)$ consider two key counterfactual questions that naturally arise when explaining unit-level changes: \textbf{1/} What would have been the output value for the unit had the input vector remained the same (to its background value), but the mechanism changed (to its foreground value)? The term $f^{(2)}\big(\bfx^{(1)}\big)$ answers this enquiry.
\textbf{2/} What would have been the output value for the unit had the mechanism remained the same (to its background value), but the input vector changed (to its foreground value)? The term $f^{(1)}\big(\bfx^{(2)}\big)$ answers this enquiry. These counterfactuals are predictive, and  fit the predictive interpretation of counterfactuals by Pearl using a structural causal model when the unobserved noise terms remain constant~\citep[Ch.~7.2.2]{pearl:2009:book}.


We refer to this attribution method as \CoarseAny, to emphasize that the attribution $\pi(\bfx)$ is for the overall input variables instead of each input variable, and the procedure works with \emph{any} type of mechanism $f$ (linear or non-linear).
For the special case of linear mechanisms, we can derive a closed-form solution from the above procedure that provides a more fine-grained attribution at the level of each input variable and the mechanism, which we discuss next. 

\subsection{Breakdown of coarse-grained attribution for \emph{linear} mechanisms}\label{sec:fine-linear}

Consider the linear mechanisms of the form:
$f^{(k)}(\bfx^{(k)}) \coloneqq \sum_{j=1}^d \beta_j^{(k)} x_j^{(k)}, \ k=1,2,$
where $\beta_j^{(k)}$'s are the linear coefficients. Then the Shapley value contribution of inputs $\bfX$ to $\Delta y$ (Eq.~\ref{eq:sv-x}) reduces to
\begin{align}
	\pi(\bfx)  \coloneqq \sum_{j=1}^d \frac{(\beta^{(1)}_j + \beta^{(2)}_j)}{2} (x^{(2)}_j - x^{(1)}_j ).
\end{align}
From here, we obtain the contribution of each input variable $X_j$ to $\Delta y$ as 
\begin{align}
    \pi(x_j) \coloneqq \frac{\beta^{(1)}_j + \beta^{(2)}_j}{2} \left(x^{(2)}_j - x^{(1)}_j \right) .
\end{align}
In words, the Shapley value contribution of each input variable $X_j$ captures the impact of the change in the values of the variable in the foreground and background scenarios, weighed by the \textit{average} value of corresponding linear coefficients.

Analogously, the Shapley value contribution of the mechanism $f$ to $\Delta y$ (Eq.~\ref{eq:sv-f}) reduces to
\begin{align}
	\label{eq:SV-player-f-linear}
	\pi(f) &\coloneqq \sum_{j=1}^d \frac{x^{(1)}_j + x^{(2)}_j} {2} \left(\beta^{(2)}_j - \beta^{(1)}_j \right).
\end{align}
That is, the Shapley value contribution of $f$ captures the impact of changes in the linear coefficients of variables in the background and foreground scenarios, weighed by the \emph{average} values of corresponding variables in the two scenarios. 
	
The Shapley values $\pi(x_j)$ and $\pi(f)$ can be negative for the linear mechanisms because we consider the change in the linear coefficients as well as the change in the input values, both of which can be either positive or negative. Consider a simple case wherein the linear coefficients of the foreground and background mechanisms are positive. A negative value of $\pi(x_j)$ then implies a decrease in the value of $X_j$ from the background to the foreground scenario, whereas a negative value of $\pi(f)$ implies a decrease in the linear coefficients from the background to the foreground scenario. 
%
This two-step attribution procedure for fine-grained attribution only works with linear mechanisms, however. Next we present an attribution method that provides a fine-grained attribution for \emph{any} type of mechanism.

\subsection{Fine-grained attribution for \emph{any} type of mechanism}\label{sec:fine-any}
To obtain a fine-grained attribution for any type of mechanism, we consider each input variable as a potential cause of the output change, besides the mechanism. The core idea remains the same as in the coarse-grained attribution: starting with their background values for all causes, we gradually replace the value of each cause by its foreground value to measure its contribution. 

Here, the set of causes of $\Delta y$ is $\{f, x_1, \dotsc, x_d\} \eqqcolon V$. 
We modify the contribution (Eq.~\ref{eq:contrib-coarse}) in the coarse-grained attribution method to account for each input variable. In particular, we define the contribution of a cause $w$ given that we have already replaced causes in $A \subset V$ to their foreground values before as 
\begin{align*}
	C(w \mid A; V) \coloneqq 
	f^{\left(\mathbf{2}_{A \cup \{w\}}(f)\right)} \left(x_1^{\left(\mathbf{2}_{A \cup \{w\}}(x_1)\right)}, \dotsc \right) -
	f^{\left(\mathbf{2}_{A}(f)\right)} \left(x_1^{\left(\mathbf{2}_{A}(x_1)\right)}, \dotsc
	\right).
\end{align*}
Averaging the contributions over all orderings, we get the Shapley value contribution of each cause $w \in V$ to the output change $\Delta y$ as
\begin{align}
    \varphi(w) \coloneqq \sum_{A \subset V\setminus \{w\}} \frac{1}{(d+1)\binom{d}{|A|}} C(w \mid A; V). \label{eq:shapley-fine}
\end{align}

We refer to this attribution method as \FineAny. While this method is close in spirit to the classical Shapley value attribution proposal for explaining prediction models~\citep{lundberg:2017:shap}, the key difference is that we also include the mechanism as a player in the game. For reasons similar to the previous two attribution methods, the Shapley value contributions $\varphi(w)$'s can also be negative. 
It is easy to see that for a single input variable, this attribution method coincides with the coarse-grained attribution; the expressions for Shapley values will be the same. When there are at least two input variables (i.e., $d \geq 2$), we consider more than two counterfactual terms however---involving various combinations of input variables and mechanisms---in this fine-grained attribution.



\subsection{Properties of proposed methods}\label{section:properties}
We are now ready to discuss the properties of the proposed attribution methods. 


\begin{theorem}
	Both attribution methods satisfy completeness and dummy axioms.
\end{theorem}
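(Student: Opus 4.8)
The plan is to recognize both methods as genuine Shapley values of an explicitly constructed coalition game, and then read off Completeness and Dummy from the marginal-contribution structure rather than from the closed-form expressions in \eqref{eq:sv-f} and \eqref{eq:sv-x}. For the coarse method I would define the characteristic function $\nu\colon 2^W \to \bR$ by $\nu(A) \coloneqq f^{(\mathbf{2}_A(f))}\big(\bfx^{(\mathbf{2}_A(\bfx))}\big)$, so that $\nu(\emptyset) = f^{(1)}(\bfx^{(1)}) = y^{(1)}$ and $\nu(W) = f^{(2)}(\bfx^{(2)}) = y^{(2)}$. The key observation is that \eqref{eq:contrib-coarse} is exactly the marginal contribution $C(w \mid A; W) = \nu(A \cup \{w\}) - \nu(A)$: toggling player $w$ from its background to its foreground index changes only the index attached to $w$, since $\mathbf{2}_{A\cup\{w\}}$ and $\mathbf{2}_A$ agree on every other player. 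Since the weights $\tfrac{1}{|W|\binom{|W|-1}{|A|}}$ coincide with the standard Shapley weights $\tfrac{|A|!(|W|-|A|-1)!}{|W|!}$, the quantity $\pi$ is the textbook Shapley value of $\nu$. I would build the analogous game $\mu\colon 2^V \to \bR$ for \FineAny by the same formula with each $x_j$ toggled separately, and note $C(w \mid A; V) = \mu(A \cup \{w\}) - \mu(A)$, so that $\varphi$ is the Shapley value of $\mu$.

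For Completeness I would use the telescoping of marginal contributions. Fix any ordering of the players; summing the marginal contributions along that ordering collapses to $\nu(W) - \nu(\emptyset) = y^{(2)} - y^{(1)} = \Delta y$ (and likewise $\mu(V)-\mu(\emptyset)$ for the fine game). Since each Shapley value is the average over all orderings of the per-ordering marginal contribution of its player, swapping the order of summation gives $\sum_{w} \pi(w) = \tfrac{1}{|W|!}\sum_{\sigma} \Delta y = \Delta y$, and identically $\sum_{w}\varphi(w)=\Delta y$. This delivers Completeness directly, without invoking the general efficiency theorem.

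For Dummy I would verify that the stated hypotheses make the relevant player a dummy, i.e.\ contribute zero marginally to every coalition. If $f^{(2)}(\bfx) = f^{(1)}(\bfx)$ for all $\bfx \in \calbfX$, then for any $A \not\ni f$ the indicators $\mathbf{2}_{A\cup\{f\}}$ and $\mathbf{2}_A$ are identical on the inputs, so $\nu(A\cup\{f\}) - \nu(A) = f^{(2)}(\bfx') - f^{(1)}(\bfx') = 0$ at the common input configuration $\bfx'$; hence $\pi(f)=0$, and the same computation on $\mu$ yields $\varphi(f)=0$. If $\bfx^{(2)} = \bfx^{(1)}$ (respectively $x_j^{(2)} = x_j^{(1)}$), then toggling $\bfx$ (respectively $x_j$) leaves the evaluated input unchanged, so the marginal contribution vanishes for every coalition, giving $\pi(\bfx)=0$ (respectively $\varphi(x_j)=0$).

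The only step requiring care is the verification that toggling a single player leaves the other players' background/foreground indices untouched, which is precisely what licenses rewriting $C$ as a clean marginal contribution of the games $\nu$ and $\mu$; this is immediate from the definition of $\mathbf{2}_A$ but is the hinge on which both axioms rest. Once that identification is in place, Completeness and Dummy are inherited from the standard Shapley value, so no separate argument is needed for the linear closed forms of \cref{sec:fine-linear}, which are merely special cases of the coarse game $\nu$.
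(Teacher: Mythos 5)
Your proof is correct and takes essentially the same route as the paper's: the paper likewise establishes Completeness by the telescoping of marginal contributions from the empty coalition $f^{(1)}\big(\bfx^{(1)}\big)$ to the grand coalition $f^{(2)}\big(\bfx^{(2)}\big)$ (appealing to Shapley efficiency), and establishes Dummy by checking that the unchanged player's marginal contribution $C(w \mid A)$ vanishes for \emph{every} subset $A$, separately in the coarse game over $W=\{f,\bfx\}$ and the fine game over $V=\{f,x_1,\dotsc,x_d\}$, using exactly the indicator-function identities you isolate (that $\mathbf{2}_{A\cup\{w\}}$ and $\mathbf{2}_A$ agree on all players other than $w$). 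Your explicit construction of the characteristic functions $\nu$ and $\mu$ and the verification of the Shapley weights simply make formal what the paper's direct computations carry out in place, so there is no substantive difference.
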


\begin{theorem}
	When the mechanisms are linear, i.e., $f^{(k)}(\bfx^{(k)}) \coloneqq \sum_{j=1}^d \beta_j^{(k)} x_j^{(k)}, \ k=1,2$,
	the Shapley value attributions of both methods are the same, i.e., $\varphi(w) = \pi(w)$ for $w \in \{f, x_1, \dotsc, x_d\}$.
\end{theorem}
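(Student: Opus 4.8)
The plan is to reduce the $(d{+}1)$-player fine-grained game to a family of two-player games, exploiting two standard properties of Shapley values: additivity (the Shapley value of a sum of characteristic functions is the sum of the individual Shapley values) and the dummy-player property (a player whose addition never changes any coalition's value receives zero). First I would write the characteristic function underlying \FineAny explicitly. For a coalition $A \subset V$, set
\begin{align*}
    g(A) \coloneqq f^{\left(\mathbf{2}_A(f)\right)}\left(x_1^{\left(\mathbf{2}_A(x_1)\right)}, \dotsc, x_d^{\left(\mathbf{2}_A(x_d)\right)}\right),
\end{align*}
so that $C(w \mid A; V) = g(A \cup \{w\}) - g(A)$. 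For linear mechanisms this becomes
\begin{align*}
    g(A) = \sum_{j=1}^d \beta_j^{\left(\mathbf{2}_A(f)\right)}\, x_j^{\left(\mathbf{2}_A(x_j)\right)} \eqqcolon \sum_{j=1}^d h_j(A),
\end{align*}
where each summand $h_j(A) \coloneqq \beta_j^{(\mathbf{2}_A(f))} x_j^{(\mathbf{2}_A(x_j))}$ depends on $A$ only through the membership of $f$ and of $x_j$.

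Next I would invoke additivity to write $\varphi(w) = \sum_{j=1}^d \varphi_j(w)$, where $\varphi_j$ denotes the Shapley value computed from the characteristic function $h_j$ over the same player set $V$. In the game $h_j$, every input player $x_i$ with $i \neq j$ is a dummy, since toggling $x_i$ between its background and foreground value leaves $h_j$ unchanged; hence $\varphi_j(x_i) = 0$ for $i \neq j$. Consequently $\varphi(x_j) = \varphi_j(x_j)$ and $\varphi(f) = \sum_{j=1}^d \varphi_j(f)$, and each $h_j$ effectively reduces to a two-player game with players $f$ and $x_j$ whose four coalition values are $\beta_j^{(1)} x_j^{(1)}$, $\beta_j^{(2)} x_j^{(1)}$, $\beta_j^{(1)} x_j^{(2)}$, and $\beta_j^{(2)} x_j^{(2)}$.

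Finally I would compute these two-player Shapley values by averaging over the two orderings, yielding $\varphi_j(x_j) = \tfrac{1}{2}(\beta_j^{(1)} + \beta_j^{(2)})(x_j^{(2)} - x_j^{(1)})$ and $\varphi_j(f) = \tfrac{1}{2}(x_j^{(1)} + x_j^{(2)})(\beta_j^{(2)} - \beta_j^{(1)})$. The former matches $\pi(x_j)$ termwise, and summing the latter over $j$ matches the closed form for $\pi(f)$ in \cref{eq:SV-player-f-linear}; since $\pi(\bfx) = \sum_j \pi(x_j)$ by construction, this gives $\varphi(w) = \pi(w)$ for every $w \in \{f, x_1, \dotsc, x_d\}$. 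The one step requiring care rather than mere calculation is the reduction itself: I must verify that $g = \sum_j h_j$ is a genuine decomposition of characteristic functions over the common player set $V$, so that additivity and the dummy property apply verbatim. Once that is established, the remaining algebra is routine and produces exactly the linear-case expressions derived in \cref{sec:fine-linear}.
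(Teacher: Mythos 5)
Your proposal is correct, and it takes a genuinely different route from the paper's proof. The paper works directly on the full $(d+1)$-player game: under linearity it expands the marginal contributions and shows they collapse, e.g.\ $C(x_i \mid A; V) = \beta_i^{\left(\mathbf{2}_{A}(f)\right)}\bigl(x_i^{(2)}-x_i^{(1)}\bigr)$, so each marginal depends on $A$ only through membership of $f$ (and symmetrically, $C(f \mid A; V)$ only through membership of the $x_j$'s); it then averages, asserting that half of the subsets $A$ contain $f$ and half do not. You instead decompose the characteristic function as $g=\sum_j h_j$, invoke additivity of the Shapley value, eliminate the dummy players in each $h_j$, and finish with a trivial two-ordering computation in the induced two-player game between $f$ and $x_j$. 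Your route buys modularity and, arguably, slightly more rigor: the paper's counting step is loose as stated, since the Shapley value is not a uniform average over subsets but a weighted one with weights $\tfrac{1}{(d+1)\binom{d}{|A|}}$; the fact actually needed is that the aggregate Shapley weight of the subsets containing $f$ equals $\tfrac{1}{2}$, which follows from the permutation view ($f$ precedes $x_i$ in exactly half of all orderings) --- precisely what your dummy-player reduction packages cleanly. Two small points to nail down in your write-up: first, $h_j(\emptyset)=\beta_j^{(1)}x_j^{(1)}$ need not vanish, so the $h_j$ are games only up to an additive constant; this is harmless because Shapley values depend only on the differences $h_j(A\cup\{w\})-h_j(A)$, but it deserves a sentence. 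Second, the step ``dummies may be deleted without changing the remaining players' Shapley values'' should be justified once, again via the permutation formula. The paper's direct computation, by contrast, is self-contained and requires no appeal to the additivity or dummy axioms; both arguments land on the same closed forms $\varphi(x_j)=\tfrac{\beta_j^{(1)}+\beta_j^{(2)}}{2}\bigl(x_j^{(2)}-x_j^{(1)}\bigr)=\pi(x_j)$ and $\varphi(f)=\sum_{j=1}^d \tfrac{x_j^{(1)}+x_j^{(2)}}{2}\bigl(\beta_j^{(2)}-\beta_j^{(1)}\bigr)=\pi(f)$.
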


\textbf{Computational Complexity.} Assume that the mechanism in concern $f$ is an oracle that evaluates an input vector $\bfx$ of dimension $d$ in $\mathcal{O}(g(d))$ time for some function $g$. That is, the worst-case computational complexity of $f(\bullet)$ is $\mathcal{O}(g(d))$. This is a reasonable assumption because commonly assumed mechanisms like prediction models take more steps to evaluate an input as its dimensionality increases. The coarse-grained attribution method evaluates input vectors by mechanisms four times in total, and performs basic algebraic operations on those to obtain the attributions. As such, \CoarseAny runs in $\mathcal{O}(g(d))$. The attribution method for linear mechanisms only perform simple algebraic operation on the values of input variables and linear coefficients, and hence has a worse-case complexity of $\mathcal{O}(1)$. The fine-grained attribution method for non-linear mechanisms, however, has an exponential complexity, i.e., $\mathcal{O}(2^d)$, as the contributions---which take constant time---are computed over all subsets whose size is $2^{d}$. Up to 30 input variables, \FineAny finishes within minutes. For more variables, we use sampling approximations to Shapley value contributions~\citep{kononenko:2014:sampling-shap} where we sample a fixed number of subsets and compute the Shapley values by averaging contributions over those (in Eq.~\ref{eq:shapley-fine}).

	\section{Related Work}\label{section:related-work}
Next, we compare and contrast the proposed methods to existing approaches in the explainable AI and interpretable ML literature. To this end, we group the existing literature into three scenarios.

\textbf{1/ Input fixed, Mechanism fixed.}
The vast body of work on explainable AI focuses on explaining a \emph{fixed} AI/ML model~\cite{cohen:1988:xai-evaluating-ai,newell:1976:xai-ablation-study,lundberg:2017:xai-shap,ribeiro:2016:xai-lime,ribeiro:2018:xai-anchors,lakkaraju:2020:xai-black-box-explain,nguyen:2019:xai-activation-maximisation,singal:2021:xai-rsv,lundberg:2017:xai-DeepSHAP, shrikumar:2017:xai-DeepLIFT, sundararajan:2020:xai-BShap}. That is, 
most explanation methods (e.g., node-based, instance-based) explain a single prediction model $f$ at one of the following levels of details.

\textbf{Unit--Input Level.} Methods like LIME~\cite{ribeiro:2016:xai-lime}, SHAP~\cite{lundberg:2017:xai-shap}, causal Shapley value~\citep{heskes:2020:xai-causal-shap}, and asymmetric Shapley value~\cite{frye:2020:xai-asymmetric-shapley} explain some function $g$ of prediction $f(\bfx)$ by quantifying the importance of the value $x_j$ of each input variable for unit $u$. A typical example of $g$ is the deviation of prediction from its expected value, i.e., $g \coloneqq  f(\bfx) - \bE[f(\bfX)]$. While we can introduce a new binary variable $S$ to a new function $f':\calbfX \times \mathcal{S} \rightarrow \calY$, which switches to either $f^{(1)}$ or $f^{(2)}$, doing so introduces ambiguity in selecting the distribution for computing the baseline expectation $\bE[f'(\bfX)]$. 
    
\textbf{Unit Level.} Methods in this category explain some function $g$ of prediction $f(\bfx)$ by quantifying the importance of a subset of input vector $\bfz \subseteq \bfx$ of the unit $u$ or for other units $\tilde{u}$. Counterfactual visual explanation method~\cite{goyal:2019:xai-counterfactual-visual-explanations} finds regions $\bfz$ in a given image $\bfx$, when changed, would produce a different classification. \citet{koh:2017:xai-influence-function} use influence functions to identify instances $\bfx$ in the training set that are responsible for the prediction of a given test instance $\tilde{\bfx}$. Nguyen et al.~\cite{nguyen:2019:xai-activation-maximisation} use activation maximization to identify input instances $\bfx$ that strongly activate a function (neuron) of interest. Anchors~\cite{ribeiro:2018:xai-anchors} explain the prediction $f(\bfx)$ for an input vector $\bfx$ by identifying rules (i.e., subsets $\bfz \subseteq \bfx$) that are sufficient for the prediction. These methods also cannot attribute to the mechanism $f$ as they also assume that the mechanism is fixed. 
    
\textbf{Aggregate Input Level.} Classical proposals on ablation study~\cite{newell:1976:xai-ablation-study,cohen:1988:xai-evaluating-ai} quantify the importance of an input variable to the prediction model, e.g., through random perturbation of the values of input variables or removing the input variables. As such, we obtain the importance of input variable $X_j$ to the prediction model $f$, in contrast to the prediction $f(\bfx)$ for a unit $u$. Methods that provide explanations at the unit--input level can also provide input--level explanations by aggregating unit-level explanations over all units of an input variable. Like previous methods, these also assume that the mechanism is fixed.

\textbf{2/ Input changes, Mechanism fixed.}
Most edge-based or flow-based explanation methods~\cite{singal:2021:xai-rsv,shrikumar:2017:xai-DeepLIFT} quantify the importance of an edge, given a graphical representation of input variables (e.g., causal graph), by propagating changes in the values of input variables along the edges. As such, these methods can handle changes in input variables. Formally, these methods attribute the deviation in the prediction for a unit $u$ from the prediction for a reference unit $\tilde{u}$ using a single prediction model $f$, i.e., $f(\bfx) - f(\tilde{\bfx})$. We could set $\bfx \coloneqq \bfx^{(2)}$ and $\tilde{\bfx} \coloneqq \bfx^{(1)}$ to attribute the prediction change to input variables. But they cannot attribute to the mechanism as they assume that the mechanism is fixed. 

\textbf{3/ Input changes, Mechanism changes.}
A classical statistical methodology called Kitagawa-Blinder-Oaxaca decomposition~\cite{kitagawa:1955:xai-kitagawa-blinder-oaxaca-decomposition} also explains changes, but at the aggregate level (mean), and only works with linear models. In a nutshell, the decomposition of prediction change into input change and mechanism change is obtained by taking the difference between the linear regression equations of background and foreground scenarios.
A recent proposal by \citet{budhathoki:2021:distribution-change} attributes the change in the distribution (or its property) of a variable to the conditional distribution of each variable given its direct parents in the causal graph. If an FCM is also available, by replacing the structural assignments at the unit-level (with corresponding noise values), we can attribute the unit-level output change. We operationalize this idea in Appendix B. 
	\section{Experiments}\label{section:experiments}
First, through simulations, where we can establish the ground truth, we evaluate the performance of our methods for attributing output changes when we learn prediction models from data, and study the scalability of \FineAny whose computational complexity is exponential (\cref{subsec:simulations}). Then we assess whether results are sensible on a real-world case study (\cref{subsec:case-study}). We ran the scalability simulations on a Macbook Pro with 8GB RAM and 1.4 GHz Quad-Core Intel Core i5 processor.

\subsection{Simulated experiments}\label{subsec:simulations}
Here we empirically investigate two research questions: \textbf{1/} How reliable are the attributions when we do not know the underlying mechanisms? \textbf{2/} Does the fine-grained attribution scale?


\textbf{Reliability Simulation Setup \& Results.} We start with ground truth background and foreground models with random parameters. Then we generate random background and foreground input/output samples of the same size. For each instance of those samples, we obtain ground truth attributions using the ground truth models. Likewise, using prediction models learned from respective samples, we obtain estimated attributions with fitted models.

In particular, we consider 2 types of ground truth models: linear and polynomial with interactions. We uniformly randomly choose between 1 to 5 number of inputs. For a polynomial model, we uniformly randomly select its degree between 2 to 4. For both models, we uniformly randomly assign between -5 to 5 to each of its linear coefficients. For each input variable (in both background and foreground scenario), we draw samples of size 2000 independently from a standard Normal distribution. Then we apply those models to obtain the background and foreground outputs.

Fig.~\ref{fig:simulation-results} (a--b) shows the mean absolute error (MAE) of estimated fine-grained and coarse-grained attributions (relative to ground truth attributions) using fitted linear and XGBoost regressions (with default hyperparameters) for 100 ground truth models. When the ground truth is linear, regardless of the type of fitted prediction models, estimated attributions are closer to the ground truth attribution. In contrast, for the non-linear ground truth, attributions show a relatively high variance.

\begin{figure}[tb]
	\centering
	\begin{minipage}{0.25\columnwidth}
		\centering
		\includegraphics[width=0.85\columnwidth]{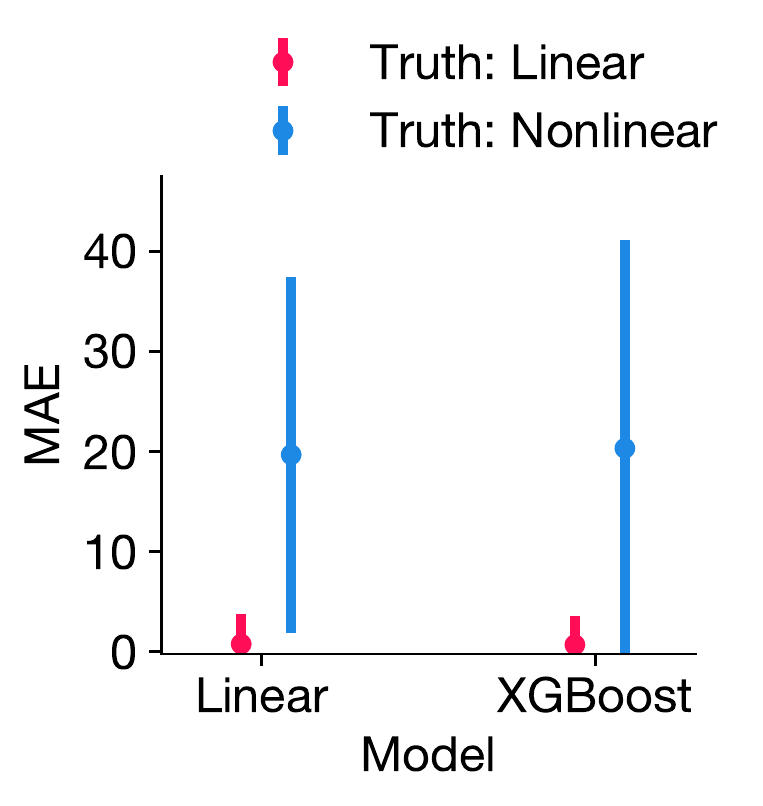}
		\subcaption{\CoarseAny}
	\end{minipage}%
	\begin{minipage}{0.25\columnwidth}
		\centering
		\includegraphics[width=0.85\columnwidth]{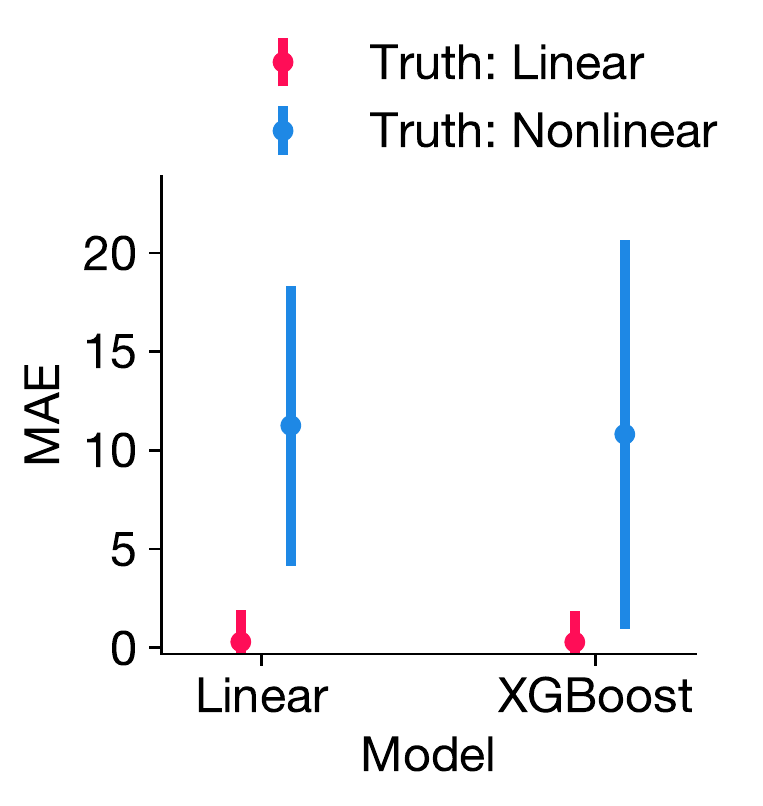}
		\subcaption{\FineAny}
	\end{minipage}%
	\begin{minipage}{0.5\columnwidth}
		\centering
		\includegraphics[width=\columnwidth]{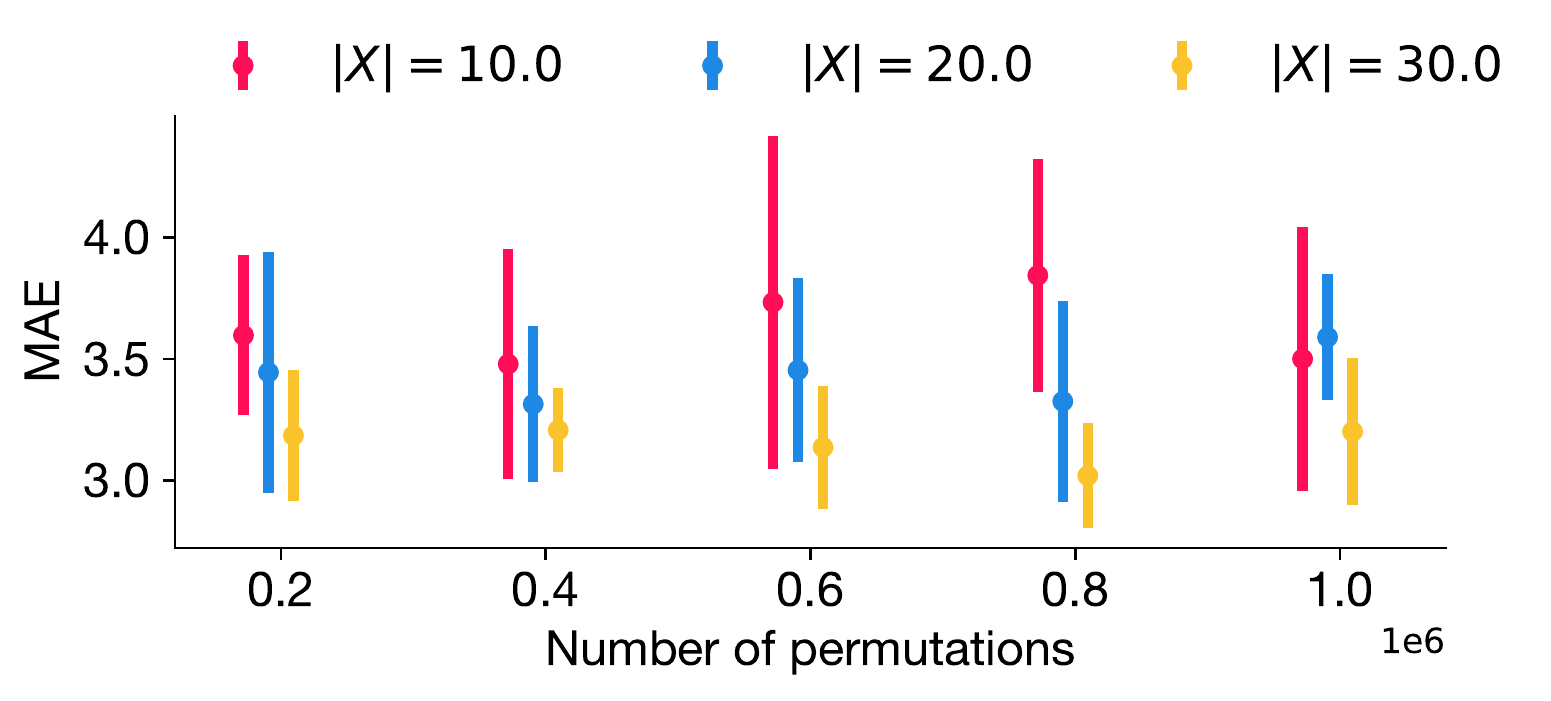}
		\subcaption{\FineAny}
	\end{minipage}
	\caption{\textbf{(a--b)} Mean absolute error (MAE) of estimated fine-grained and coarse-grained attributions from 2 types of prediction models for linear and non-linear ground truth models. Both types of estimated attributions show a large variance for the non-linear ground truth. For the linear ground truth, however, the estimated attributions have lower variance and are also closer to the ground truth attributions. \textbf{(c)} Approximation error of fine-grained attributions for the linear ground truth. As we sample more permutations, the approximation error decreases.}
	\label{fig:simulation-results}
\end{figure}

\textbf{Scalability Simulation Setup \& Results.} We first study how well sampling-based fine-grained attributions approximates exact fine-grained attributions. We use linear ground truth models where we can compute the exact attributions using the closed form expression for any number of features. Fig.~\ref{fig:simulation-results} (c) shows the MAE of approximated attributions for the case of 10, 20 and 30 input variables as we sample more permutations. The reported averages and standard errors are calculated over 100 ground truth models and 1000 sample sizes. As expected, if there are more input variables, with more permutations the attributions are close to the true attributions. This tells us that we can trade-off accuracy for speed to obtain fine-grained attributions when the number of input variables is large.


\subsection{Real-world case study}\label{subsec:case-study}
We will use real-world earnings data to explain how our method can be used to explain the drivers of changes in earnings over time. We first explain the drivers for all individuals in the study (by aggregating over unit--level attributions), and then show the results for an exemplar individual.

\textbf{Background.}
The Panel Study of Income Dynamics (PSID) is an ongoing survey of households launched in 1968 in the United States with the goal of providing data to assess President Lyndon Johnson's ``War on Poverty''~\citep{johnson:2018:psid}. The survey was conducted annually until 1997 and biennially thereafter. In particular, this survey collects information on income and family demographics of thousands of representative households in the US over time. We use our method to identify the drivers of change in the income of households between two surveys, 1976 and 1982.

\textbf{Data.}
We consider a sample of 595 individuals who are heads of the households from the surveys carried out between 1976 and 1982. So, for each individual, we have 7 annual observations on 12 variables related to income. Our target is ``wage'' (reported in natural logarithm). We use the 9 relevant variables as input to a prediction model that predicts ``wage'' (clarified in Appendix A).

\textbf{Modelling.} We wish to understand the drivers of change in the earnings reported in surveys from 1976 versus 1982. We observe that the earnings of roughly 98\% of individuals have increased between the surveys (585 out of 595). 

Let $\Delta \mathrm{wage}\%$ denote the growth in average earnings reported in 1982 relative to that in 1976, i.e.,
\begin{align*}
    \Delta \mathrm{wage} \% = \frac{\Big(\sum_{u=1}^{595}\mathrm{wage}^{(\mathrm{1982})}(u) - \mathrm{wage}^{(\mathrm{1976})}(u)\Big)}{\sum_{u=1}^{595}\mathrm{wage}^{(\mathrm{1976})}(u)} \times 100 \%,
\end{align*}
where $\mathrm{wage}^{(\mathrm{t})}(u)$ is the earnings of individual $u$ reported in the survey conducted in year $t$, and divisions by $595$ cancel out.
We observe a relative growth of $\Delta \mathrm{wage}\% \approx 9\%$. What drove this change? It is difficult to establish the causal graph here. We thus resort to prediction models, and explain this wage gap w.r.t. the prediction model.

We assume a linear mechanism, which we shall probe with model fit diagnostics later. That is, we model ``wage'' as a linear function of input variables for each survey. 
We use Ordinary Least Squares (OLS) method for fitting the linear models from data. The fitted linear coefficients and model diagnostics are shown in Figure~\ref{fig:1976-vs-1982-fit} in the Appendix. We observe that the R-squared fits of linear models from both data are roughly 0.99. That is, roughly 99\% of the variance in ``wage'' is explained by each linear model. This shows that linear models are suitable for this case study. 

To explain $\Delta \mathrm{wage}\%$, first we compute the Shapley value attributions to input variables and mechanism for the change in earnings of each individual $u$, i.e. $\mathrm{wage}^{(\mathrm{1982})}(u) - \mathrm{wage}^{(\mathrm{1976})}(u)$. We then take the sum  of Shapley value attributions to each input variable of all individuals, and repeat the same for the mechanism. The grand sum of the resulting sums then matches the numerator $\sum_{u=1}^{595}\mathrm{wage}^{(\mathrm{1982})}(u) - \mathrm{wage}^{(\mathrm{1976})}(u)$, by which we obtain the aggregate attributions to input variables and mechanism.

\paragraph{\textbf{Results: Aggregate attribution}.}
Our method reveals that roughly 76\% of $\Delta \mathrm{wage} \%$ is driven by mechanism change and the remaining 24\% is driven by input change as shown in Figure~\ref{fig:attribution-results} (a). If we break down the attribution to input change further, the change in the years of full-time work experience (i.e., ``exp'') is the biggest driver amongst others.
\begin{figure}[tb]
    \centering
    \begin{minipage}{0.48\columnwidth}
        \includegraphics[width=\columnwidth]{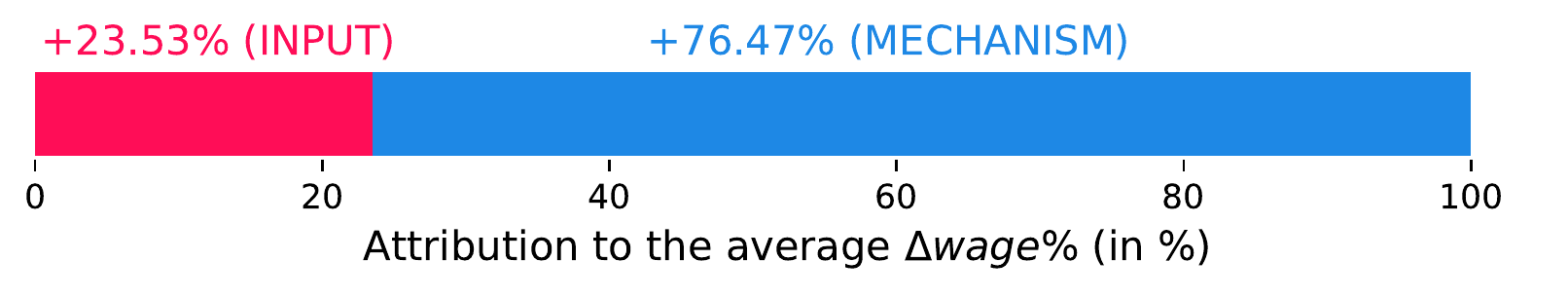}
    \end{minipage}%
	\begin{minipage}{0.48\columnwidth}
		\includegraphics[width=\columnwidth]{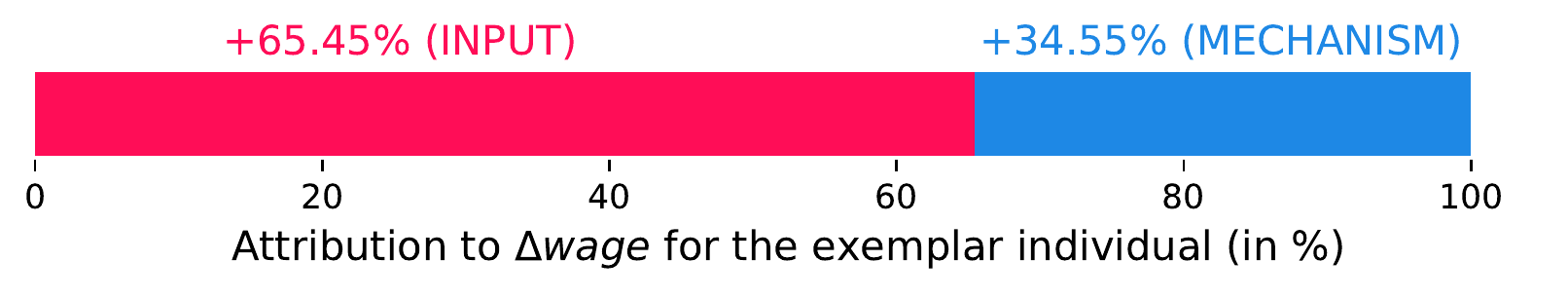}
	\end{minipage}
    \begin{minipage}{0.48\columnwidth}
        \includegraphics[width=\columnwidth]{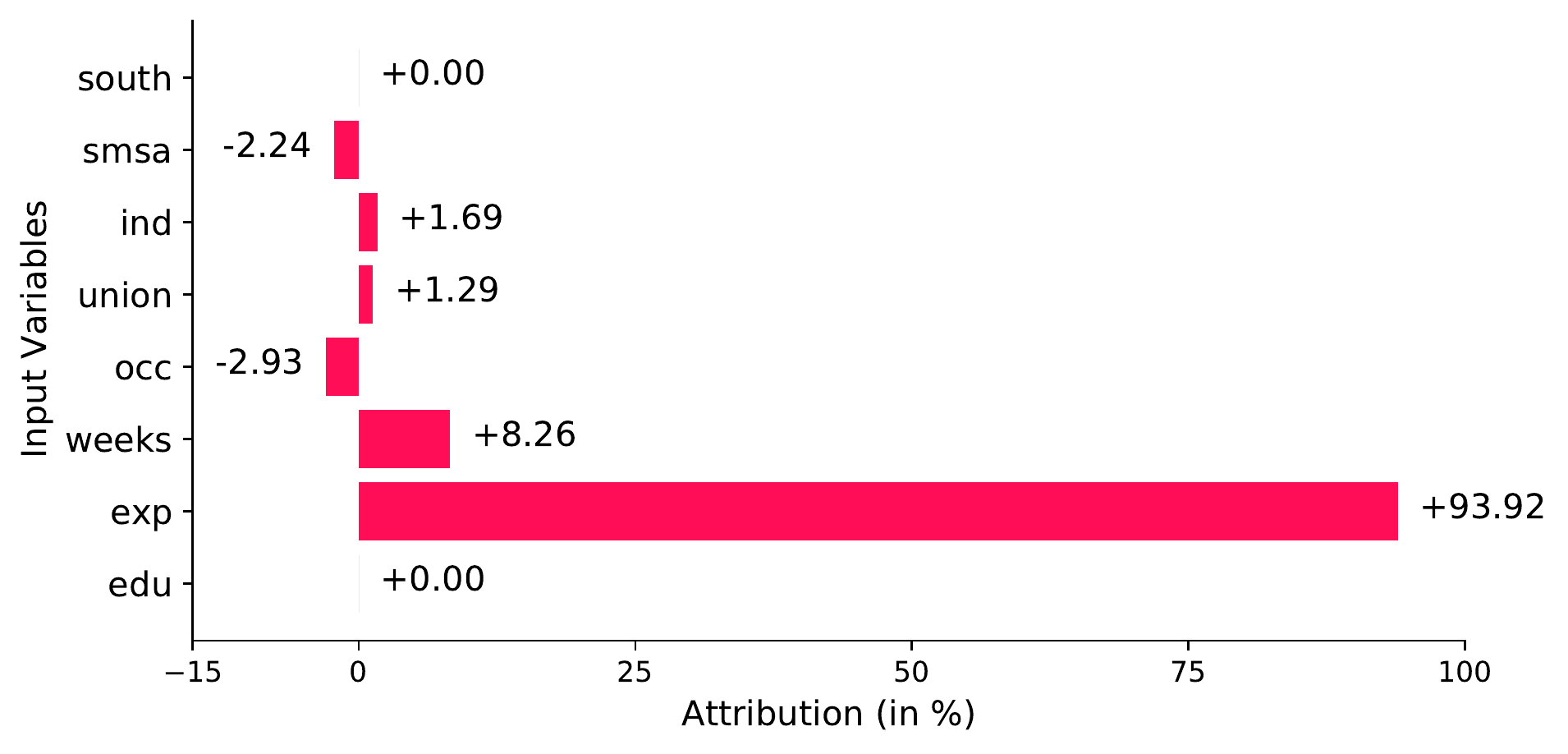}
        \subcaption{Aggregated attributions over units}
    \end{minipage}%
	\begin{minipage}{0.48\columnwidth}
		\includegraphics[width=\columnwidth]{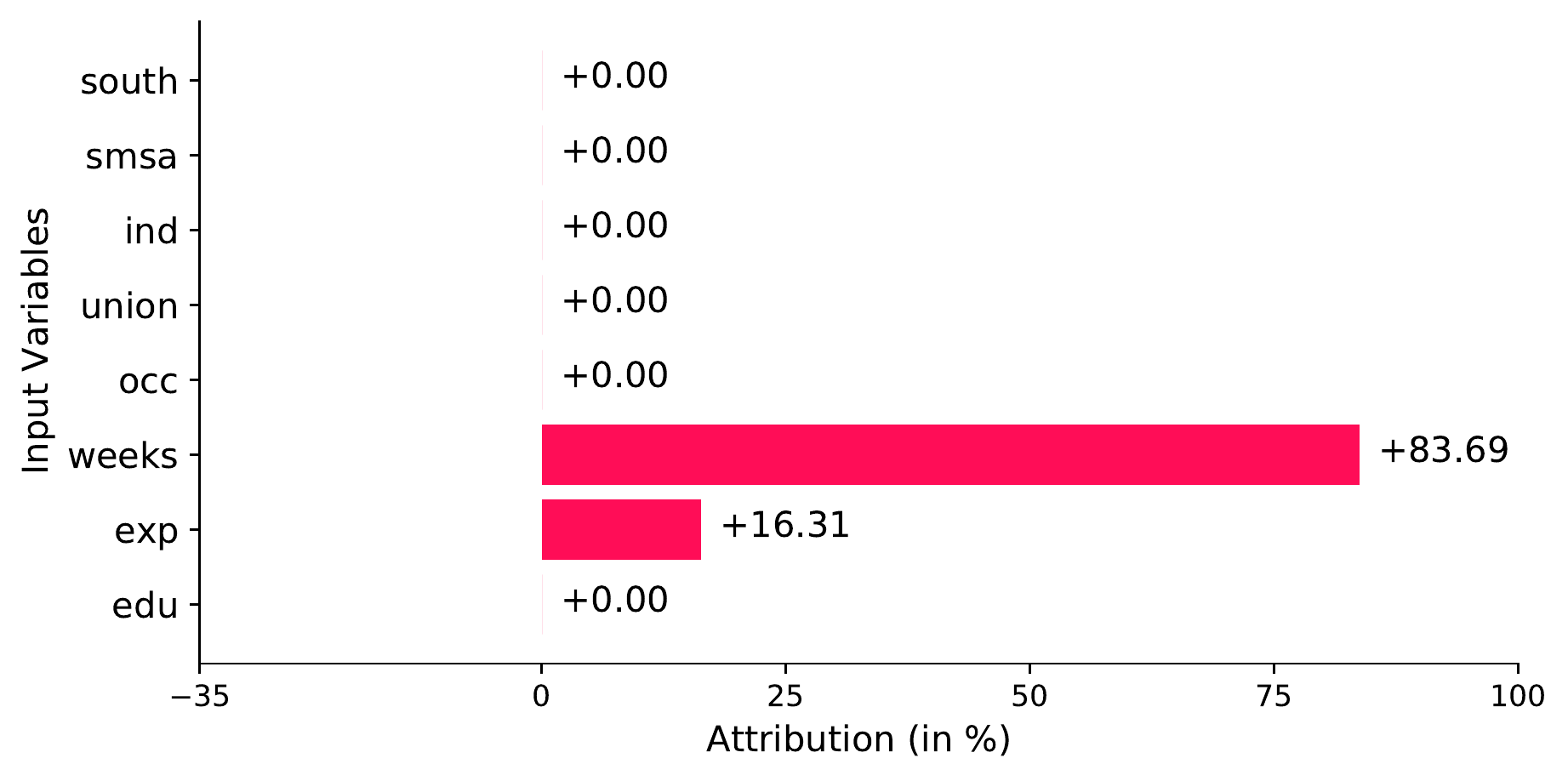}
		\subcaption{Unit-level attribution anecdote}
	\end{minipage}
    \caption{\textbf{(a)} Results showing the drivers of change in earnings reported in surveys from 1976 and 1982. The biggest driver of the change is the change in mechanism. If we breakdown the attribution to input variables further, the change in the years of full-time work experience (``exp'') is the biggest driver amongst all input variables. \textbf{(b)} Drivers of change in earnings identified by our method for an individual with the largest increase.}
    \label{fig:attribution-results}
\end{figure}
The attribution to mechanism change is reasonable for the following reason. From the model fit plot in Fig.~\ref{fig:1976-vs-1982-fit}, we observe that the coefficient of ``occ'' (white-collar occupation or not), in particular, decreased by roughly 40\%, from 0.43 in 1976 to 0.26 in 1982. The 70s were a tumultuous time in the US as the country was transitioning from a manufacturing to a service-based economy, manufacturing plants shut down, jobs were lost, and workers protested. Blue-collar jobs were replaced by white-collar jobs. But many blue-collar workers who got white-collar jobs still ended up in low paid white-collar occupations~\citep{westcott:1982:70s-jobs}. Thus, being a white-collar worker did not impact earnings greatly in the beginning of the 80s.
The attribution to input ``exp'' change is also plausible given that there is a 6 year gap between the two surveys. It is also typical to get paid better with more years of experience. We also rightly attribute 0\% to ``edu'' as education of the individuals did not change between the two surveys.

\paragraph{\textbf{Results: Unit--level attribution anecdote.}} To show how our method can be used to explain unit-level changes, we pick one anecdote from the data. In particular, we study an individual (id 167) whose reported earnings has increased the most between 1976 and 1982 (by roughly 34\%) as shown in Table~\ref{tab:unit-data} in the Appendix. Only two input values have changed for this individual, namely that of ``exp'' and ``weeks''. Our method attributes changes in input variables as the biggest driver (see Fig.~\ref{fig:attribution-results} (b)). If we further break down the attribution to input, we observe that the drivers amongst input variables are the change in the number of weeks worked (``weeks'') and the work experience (``exp''). Given the gain in experience (by 6 years) and more weeks of work reported in 1982 subject to the level of education (Master's Degree) and white-collar job, the increase in earnings is reasonable. We correctly identify the drivers that drove the increase in earnings for this individual. 



%
	\section{Discussion}\label{section:discussion}
We proposed two methods based on counterfactuals for attributing the change in output values for a statistical unit at various granularities using Shapley values from game theory. 
The first method (\CoarseAny) attributes the output change to the change in the mechanism and the change in the input vector. For linear mechanisms, we obtain a closed-form solution that can also attribute to the change in the value of each input variable. To obtain this fine--grained attribution for non--linear mechanisms, we proposed \FineAny that generalises \CoarseAny to each input variable. Both methods attribute a non-zero output change to a cause only if its value changes, and their attributions sum to the output change. As it is important for real-world applications, we studied the reliability of \CoarseAny and \FineAny when we do not know the underlying mechanisms. When the underlying mechanisms are linear, both methods are accurate regardless of which model we pick. For non-linear ground truth, the variances in attributions are larger than that for the linear ground truth. We also showed that \FineAny scales w.r.t. a number of input variables if we are willing to trade-off accuracy by applying sampling approximation to Shapley values. Lastly, we presented a case study identifying the drivers of average change in the earnings of individuals between two years, and discussed the drivers for one exemplar individual.

\textbf{On Symmetrization.} Although using the concept of Shapley values, we symmetrized over all ways of replacing causes from their background to foreground values, in practice, one may prefer a natural ordering of replacements, if available. In such cases, symmetrization is not necessary; it suffices to compute only the marginal contribution of a cause given the replacements of causes before following the natural ordering.

\textbf{On Uncertainty Quantification.} If we believe that the mechanisms have no uncertainties associated with them (e.g., among models in a class), then we get point estimates with zero uncertainties from these methods. But if we learn the mechanisms from data, uncertainties are inherent. Then it might be desirable to quantify the uncertainties of the attributions. To this end, we can compute the bootstrap confidence intervals for estimated attributions---by learning mechanisms from random subsets of data and computing the attributions using them.

\textbf{Paradox 1/ Destructive Changes.}
We note that paradoxical results are possible in special cases where the change in the mechanism and the change in the input render no change in the output. Consider the following linear mechanisms, with a single input variable $X$, where the coefficient flips its sign from positive to negative going from the background to the foreground scenario:
\[
f^{(1)}(X^{(1)}) = +X^{(1)},\quad
f^{(2)}(X^{(2)}) = -X^{(2)}
\]
Now suppose we have the following input values for a unit, whose absolute values are the same, but sign flips similarly as in case of the mechanism:
\[
x^{(1)} = +1,\quad
x^{(2)} = -1
\]
The output does not change for this unit, i.e., $\Delta y=f^{(2)}(x^{(2)})-f^{(1)}(x^{(1)})=1-1=0$. But both the mechanism and the input have changed. As such, one might expect both the mechanism and the input to get non-zero attributions. Is this the case though? Let us compute the attributions using the closed-form solutions for the linear mechanisms (\cref{sec:fine-linear}):
\begin{align*}
	\pi(f) &= \frac{1}{2} \left(-1-1\right) + \frac{1}{2} \left(1+1\right) = 0, \quad 
	\pi(x) = \frac{1}{2} \left(-1 -1 \right) + \frac{1}{2} \left( 1 +1 \right) = 0 .
\end{align*}
Contrary to our expectation, both causal drivers get zero attributions. As contributions can be negative, and we take the average of contributions over all orderings, such scenarios seem almost unavoidable. While such scenarios are possible, they are contrived. We present another paradox in the Appendix. 
	
	\bibliographystyle{icml2022}
	\bibliography{paper,literature_kaibud}

\begin{thebibliography}{27}
\providecommand{\natexlab}[1]{#1}
\providecommand{\url}[1]{\texttt{#1}}
\expandafter\ifx\csname urlstyle\endcsname\relax
  \providecommand{\doi}[1]{doi: #1}\else
  \providecommand{\doi}{doi: \begingroup \urlstyle{rm}\Url}\fi

\bibitem[Balke \& Pearl(1994)Balke and Pearl]{balke:1994:response-function}
Balke, A. and Pearl, J.
\newblock Counterfactual probabilities: Computational methods, bounds and
  applications.
\newblock In \emph{Proceedings of the Tenth International Conference on
  Uncertainty in Artificial Intelligence}, UAI'94, pp.\  46–54, San
  Francisco, CA, USA, 1994. Morgan Kaufmann Publishers Inc.

\bibitem[Budhathoki et~al.(2021)Budhathoki, Janzing, Bloebaum, and
  Ng]{budhathoki:2021:distribution-change}
Budhathoki, K., Janzing, D., Bloebaum, P., and Ng, H.
\newblock Why did the distribution change?
\newblock In \emph{Proceedings of The 24th International Conference on
  Artificial Intelligence and Statistics}, volume 130 of \emph{Proceedings of
  Machine Learning Research}, pp.\  1666--1674. PMLR, 13--15 Apr 2021.

\bibitem[Cohen \& Howe(1988)Cohen and Howe]{cohen:1988:xai-evaluating-ai}
Cohen, P.~R. and Howe, A.~E.
\newblock How evaluation guides ai research: The message still counts more than
  the medium.
\newblock \emph{AI Magazine}, 9\penalty0 (4):\penalty0 35, Dec. 1988.

\bibitem[Frye et~al.(2020)Frye, Rowat, and
  Feige]{frye:2020:xai-asymmetric-shapley}
Frye, C., Rowat, C., and Feige, I.
\newblock Asymmetric shapley values: incorporating causal knowledge into
  model-agnostic explainability.
\newblock In \emph{Advances in Neural Information Processing Systems 33: Annual
  Conference on Neural Information Processing Systems 2020, NeurIPS 2020,
  December 6-12, 2020, virtual}, 2020.

\bibitem[Goyal et~al.(2019)Goyal, Wu, Ernst, Batra, Parikh, and
  Lee]{goyal:2019:xai-counterfactual-visual-explanations}
Goyal, Y., Wu, Z., Ernst, J., Batra, D., Parikh, D., and Lee, S.
\newblock Counterfactual visual explanations.
\newblock In \emph{Proceedings of the 36th International Conference on Machine
  Learning}, volume~97 of \emph{Proceedings of Machine Learning Research}, pp.\
   2376--2384. PMLR, 09--15 Jun 2019.

\bibitem[Heskes et~al.(2020)Heskes, Sijben, Bucur, and
  Claassen]{heskes:2020:xai-causal-shap}
Heskes, T., Sijben, E., Bucur, I.~G., and Claassen, T.
\newblock Causal shapley values: Exploiting causal knowledge to explain
  individual predictions of complex models.
\newblock In \emph{Advances in Neural Information Processing Systems},
  volume~33, pp.\  4778--4789. Curran Associates, Inc., 2020.

\bibitem[Janzing et~al.(2020)Janzing, Minorics, and
  Bloebaum]{janzing:2020:feature}
Janzing, D., Minorics, L., and Bloebaum, P.
\newblock Feature relevance quantification in explainable ai: A causal problem.
\newblock In \emph{Proceedings of the Twenty Third International Conference on
  Artificial Intelligence and Statistics}, volume 108 of \emph{Proceedings of
  Machine Learning Research}, pp.\  2907--2916, Online, 26--28 Aug 2020. PMLR.

\bibitem[Johnson et~al.(2018)Johnson, McGonagle, Freedman, and
  Sastry]{johnson:2018:psid}
Johnson, D.~S., McGonagle, K.~A., Freedman, V.~A., and Sastry, N.
\newblock Fifty years of the panel study of income dynamics: Past, present, and
  future.
\newblock \emph{The ANNALS of the American Academy of Political and Social
  Science}, 680\penalty0 (1):\penalty0 9--28, 2018.

\bibitem[Kitagawa(1955)]{kitagawa:1955:xai-kitagawa-blinder-oaxaca-decomposition}
Kitagawa, E.~M.
\newblock Components of a difference between two rates*.
\newblock \emph{Journal of the American Statistical Association}, 50\penalty0
  (272):\penalty0 1168--1194, 1955.

\bibitem[Koh \& Liang(2017)Koh and Liang]{koh:2017:xai-influence-function}
Koh, P.~W. and Liang, P.
\newblock Understanding black-box predictions via influence functions.
\newblock In \emph{Proceedings of the 34th International Conference on Machine
  Learning - Volume 70}, ICML'17, pp.\  1885--1894. JMLR.org, 2017.

\bibitem[Lakkaraju et~al.(2020)Lakkaraju, Arsov, and
  Bastani]{lakkaraju:2020:xai-black-box-explain}
Lakkaraju, H., Arsov, N., and Bastani, O.
\newblock Robust and stable black box explanations.
\newblock In \emph{Proceedings of the 37th International Conference on Machine
  Learning, {ICML} 2020, 13-18 July 2020, Virtual Event}, volume 119 of
  \emph{Proceedings of Machine Learning Research}, pp.\  5628--5638. {PMLR},
  2020.

\bibitem[Lundberg \& Lee(2017{\natexlab{a}})Lundberg and
  Lee]{lundberg:2017:xai-DeepSHAP}
Lundberg, S.~M. and Lee, S.
\newblock A unified approach to interpreting model predictions.
\newblock In \emph{Advances in Neural Information Processing Systems 30: Annual
  Conference on Neural Information Processing Systems 2017, December 4-9, 2017,
  Long Beach, CA, {USA}}, pp.\  4765--4774, 2017{\natexlab{a}}.

\bibitem[Lundberg \& Lee(2017{\natexlab{b}})Lundberg and
  Lee]{lundberg:2017:shap}
Lundberg, S.~M. and Lee, S.-I.
\newblock A unified approach to interpreting model predictions.
\newblock In \emph{Advances in Neural Information Processing Systems},
  volume~30. Curran Associates, Inc., 2017{\natexlab{b}}.

\bibitem[Lundberg \& Lee(2017{\natexlab{c}})Lundberg and
  Lee]{lundberg:2017:xai-shap}
Lundberg, S.~M. and Lee, S.-I.
\newblock A unified approach to interpreting model predictions.
\newblock In \emph{Advances in Neural Information Processing Systems},
  volume~30. Curran Associates, Inc., 2017{\natexlab{c}}.

\bibitem[Newell \& Simon(1976)Newell and Simon]{newell:1976:xai-ablation-study}
Newell, A. and Simon, H.~A.
\newblock Computer science as empirical inquiry: Symbols and search.
\newblock \emph{Commun. ACM}, 19\penalty0 (3):\penalty0 113--126, mar 1976.

\bibitem[Nguyen et~al.(2019)Nguyen, Yosinski, and
  Clune]{nguyen:2019:xai-activation-maximisation}
Nguyen, A., Yosinski, J., and Clune, J.
\newblock Understanding neural networks via feature visualization: {A} survey.
\newblock In \emph{Explainable {AI:} Interpreting, Explaining and Visualizing
  Deep Learning}, volume 11700 of \emph{Lecture Notes in Computer Science},
  pp.\  55--76. Springer, 2019.

\bibitem[Pearl(2009)]{pearl:2009:book}
Pearl, J.
\newblock \emph{Causality: Models, Reasoning and Inference}.
\newblock Cambridge University Press, New York, NY, USA, 2nd edition, 2009.

\bibitem[Ribeiro et~al.(2016)Ribeiro, Singh, and
  Guestrin]{ribeiro:2016:xai-lime}
Ribeiro, M.~T., Singh, S., and Guestrin, C.
\newblock "why should i trust you?": Explaining the predictions of any
  classifier.
\newblock In \emph{Proceedings of the 22nd ACM SIGKDD International Conference
  on Knowledge Discovery and Data Mining}, pp.\  1135–1144, New York, NY,
  USA, 2016. Association for Computing Machinery.

\bibitem[Ribeiro et~al.(2018)Ribeiro, Singh, and
  Guestrin]{ribeiro:2018:xai-anchors}
Ribeiro, M.~T., Singh, S., and Guestrin, C.
\newblock Anchors: High-precision model-agnostic explanations.
\newblock \emph{Proceedings of the AAAI Conference on Artificial Intelligence},
  32\penalty0 (1), 2018.

\bibitem[Shapley(1953)]{shapley:1953:solution}
Shapley, L.~S.
\newblock A value for n-person games.
\newblock Technical report, Rand Corporation, 1953.

\bibitem[Shrikumar et~al.(2017)Shrikumar, Greenside, and
  Kundaje]{shrikumar:2017:xai-DeepLIFT}
Shrikumar, A., Greenside, P., and Kundaje, A.
\newblock Learning important features through propagating activation
  differences.
\newblock In \emph{Proceedings of the 34th International Conference on Machine
  Learning - Volume 70}, ICML'17, pp.\  3145--–3153. JMLR.org, 2017.

\bibitem[Singal et~al.(2021)Singal, Michailidis, and Ng]{singal:2021:xai-rsv}
Singal, R., Michailidis, G., and Ng, H.
\newblock Flow-based attribution in graphical models: A recursive shapley
  approach.
\newblock In \emph{Proceedings of the 38th International Conference on Machine
  Learning}, volume 139 of \emph{Proceedings of Machine Learning Research},
  pp.\  9733--9743. PMLR, 18--24 Jul 2021.

\bibitem[Strumbelj \& Kononenko(2014)Strumbelj and
  Kononenko]{kononenko:2014:sampling-shap}
Strumbelj, E. and Kononenko, I.
\newblock Explaining prediction models and individual predictions with feature
  contributions.
\newblock \emph{Knowl. Inf. Syst.}, 41\penalty0 (3):\penalty0 647--–665,
  2014.

\bibitem[Sundararajan \& Najmi(2020)Sundararajan and
  Najmi]{sundararajan:2020:xai-BShap}
Sundararajan, M. and Najmi, A.
\newblock The many shapley values for model explanation.
\newblock In \emph{Proceedings of the 37th International Conference on Machine
  Learning}, volume 119 of \emph{Proceedings of Machine Learning Research},
  pp.\  9269--9278. PMLR, 13--18 Jul 2020.

\bibitem[Sundararajan et~al.(2017)Sundararajan, Taly, and
  Yan]{sundararajan:2017:xai-axioms}
Sundararajan, M., Taly, A., and Yan, Q.
\newblock Axiomatic attribution for deep networks.
\newblock In Precup, D. and Teh, Y.~W. (eds.), \emph{Proceedings of the 34th
  International Conference on Machine Learning}, volume~70 of \emph{Proceedings
  of Machine Learning Research}, pp.\  3319--3328. PMLR, 06--11 Aug 2017.

\bibitem[Westcott(1982)]{westcott:1982:70s-jobs}
Westcott, D.~N.
\newblock Blacks in the 1970's: Did they scale the job ladder?.
\newblock \emph{Monthly Labor Review}, 105:\penalty0 29--38, 1982.

\bibitem[Zhang et~al.(2015)Zhang, Wang, Zhang, and
  Sch\"{o}lkopf]{zhang:2015:invertible-fcm}
Zhang, K., Wang, Z., Zhang, J., and Sch\"{o}lkopf, B.
\newblock On estimation of functional causal models: General results and
  application to the post-nonlinear causal model.
\newblock \emph{ACM Trans. Intell. Syst. Technol.}, 7\penalty0 (2), 2015.

\end{thebibliography}
	
	\newpage
	\appendix
	\section{Experiment Details}
The data from the Panel Study of Income Dynamics is available online as the dataset ``PSID7682'' from the R package ``\href{https://cran.r-project.org/web/packages/AER}{AER}'' with the GPLv3 license. We describe the variables used for this case study in Table~\ref{tab:wage-summary}. From the original data, we exclude sensitive attributes like sex and race, and irrelevant attributes like marital status. Although they do not affect our analysis, learning fair models is not the main point of this case study. In Fig.~\ref{fig:1976-vs-1982-jointplot}, we show the scatter plot between earnings reported in those two surveys, along with their univariate histograms. 

\begin{table}[tb]
	\centering
	\renewcommand{\arraystretch}{1.2}
	\caption{Description of the variables in the earnings data.}
	\label{tab:wage-summary}
	\resizebox{\columnwidth}{!}{%
		\begin{tabular}{rll}
				\toprule
				Variable & Description & Type\\
				\midrule
				wage & Natural logarithm of earnings in past year & target\\
				edu & Years of education & input\\
				exp & Years of full-time work experience & input\\
				occ & Is the individual a white-collar worker? & input\\
				weeks & Weeks worked past year & input\\
				union & Is the individual a member of a union? & input\\
				ind & Does the individual work in a manufacturing industry? & input\\
				smsa & Does the individual reside in a standard metropolitan statistical area? & input\\
				south & Does the individual live in the South? & input\\
				\bottomrule
		\end{tabular}
	}
\end{table}

\begin{figure}[tb]
	\centering
	\includegraphics[width=0.5\columnwidth]{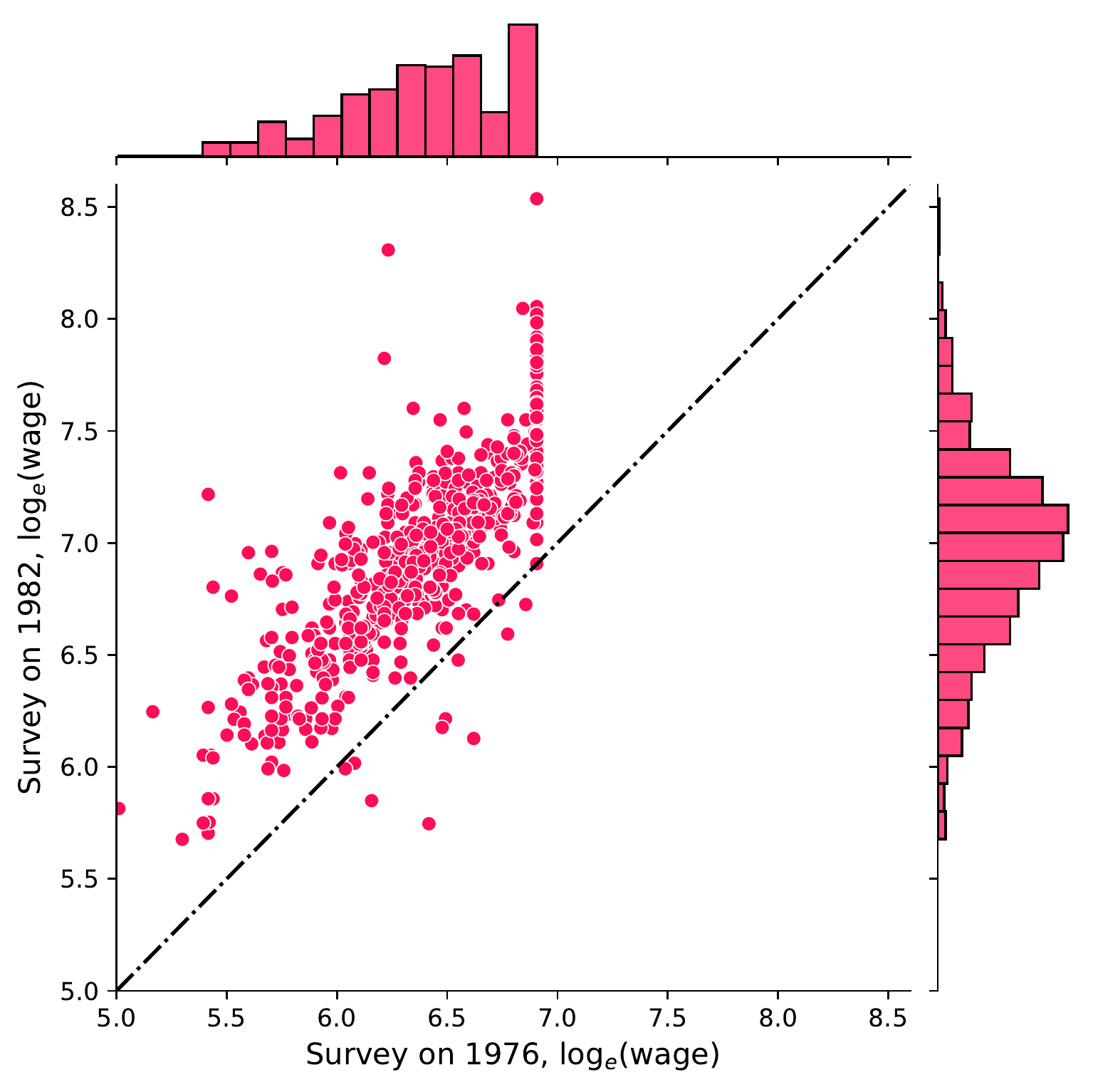}
	\caption{Natural logarithm of earnings from the past year from the survey conducted in 1976 versus that in 1982. Each dot represents the head individual of a household in the survey. For roughly 98\% of individuals, earnings have increased in the 1982 survey.}
\label{fig:1976-vs-1982-jointplot}
\end{figure}

\begin{figure}[H]
	\centering
	\includegraphics[width=0.7\columnwidth]{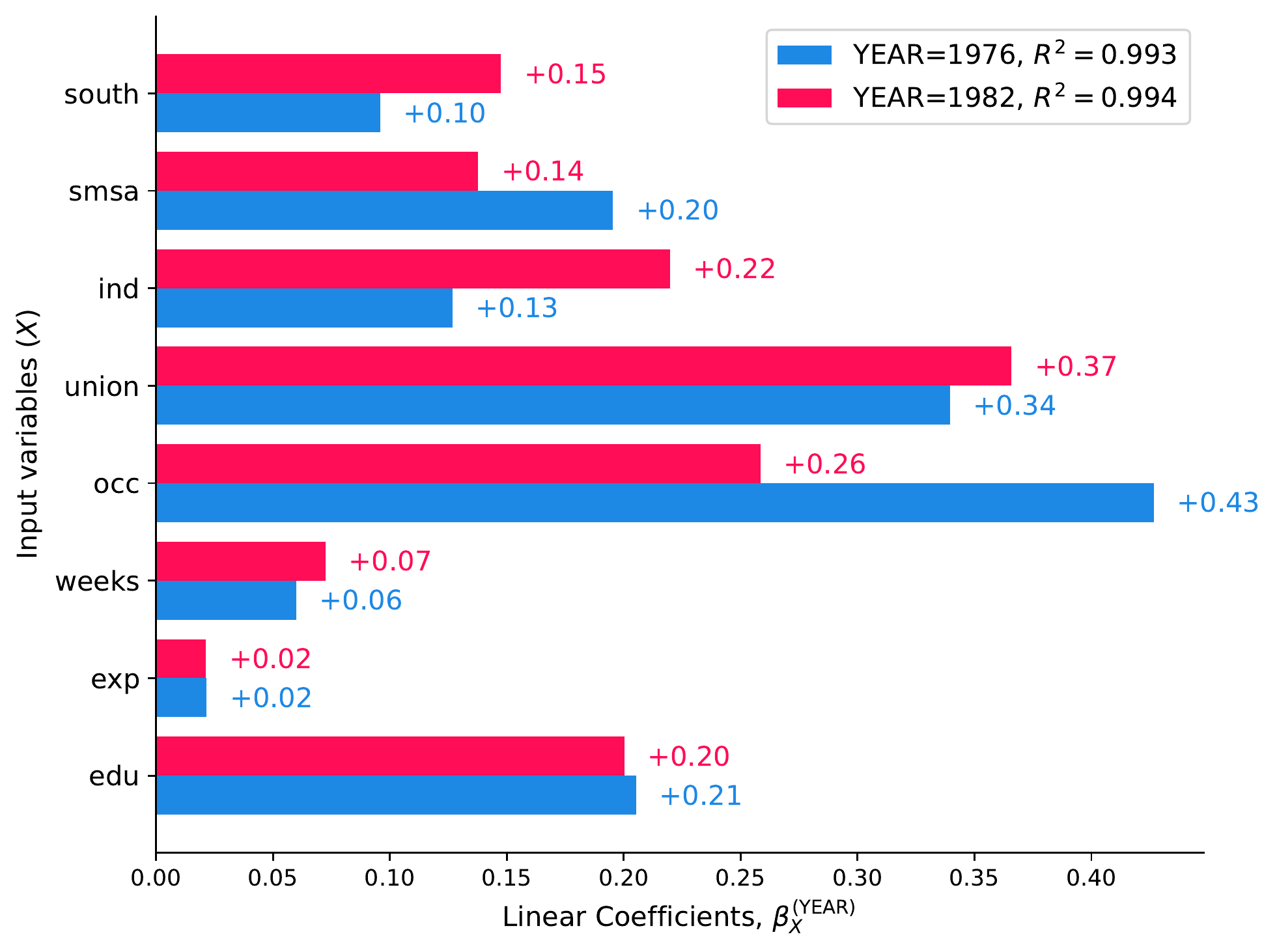}
	\caption{Linear model fits and diagnostics on data from the surveys conducted in 1976 and 1982. The R-squared fit of linear models in both data indicate that linear models are, indeed, appropriate for this case study.}
	\label{fig:1976-vs-1982-fit}
\end{figure}

\begin{table}[H]
	\centering
	\caption{Individual with largest increase in earnings.}
	\label{tab:unit-data}
		\renewcommand{\arraystretch}{1.3}
		\begin{tabular}{l | r r r r r r r r | r}
			\toprule
			year & edu & exp & weeks & occ & union & ind & smsa & south & wage \\
			\midrule
			1976 & 17.0 & \cellcolor{lightgray!30} 3.0 & \cellcolor{lightgray!30} 40.0 & 1.0 & 0.0 & 0.0 & 1.0 & 0.0 & \textbf{6.2}\\
			1982 & 17.0 & \cellcolor{lightgray!30} 9.0 & \cellcolor{lightgray!30} 50.0 & 1.0 & 0.0 & 0.0 & 1.0 & 0.0 & \textbf{8.3}\\
			\bottomrule
		\end{tabular}
\end{table}

\section{Unit-level changes with stochastic causal mechanisms}
So far, we considered deterministic algorithmic mechanisms $f$ that output $Y$ from inputs $\bfX$. The inputs $\bfX$ are the \emph{causes} of $Y$ w.r.t. the algorithm $f$. But they are not necessarily the causal ancestors of output $Y$ in the real world.  If our objective is to explain the change in the real output value $\Delta y$ accounting for causal mechanisms in real-world, we need to consider the causal relationships between variables $X_1, \dotsc, X_d, Y$ and their causal mechanisms.

Suppose that the ``mechanisms'' by which the output $Y$ is generated from its causes $X$ in the real world is stochastic. In particular, we make three assumptions here:

\textbf{Assumption C1/} We have access to the causal graph of variables $X_1, \dotsc, X_d, Y$.

\textbf{Assumption C2/} We have access to the associated functional causal model (FCM)~\citep{pearl:2009:book}. 

\textbf{Assumption C3/} The FCM is invertible~\citep{zhang:2015:invertible-fcm}.

Without loss of generality, let $Y \eqqcolon X_{d+1}$ be the sink node, i.e., $Y$ does not have any children. In an FCM, each variable $X_j$ is a function of its parents $\mathrm{PA}_j$ in the causal graph and an unobserved noise variable $N_j$, i.e., 
\begin{align}
	X_j \coloneqq f_j(\mathrm{PA}_j, N_j),\label{eq:fcm}
\end{align}
where $N_1, \dotsc, N_{d+1}$ are independent~\citep{pearl:2009:book}. The root nodes do not have any observed parents, only unobserved noise terms. Therefore, the stochastic properties of observables $X_1, \dotsc, X_d, X_{d+1}$ are derived from the joint distribution of unobserved noise variables $P_{N_1, \dotsc, N_{d+1}} = P_{N_1} \times \dotsm \times P_{N_{d+1}}$.  
By recursively applying Eq.~\eqref{eq:fcm} until we reach the root nodes (which do not have any observed parents), we can write $Y$ as a deterministic function of all noise variables $N \coloneqq (N_1, \dotsc, N_{d+1})$ instead of observed variables $X$, i.e.,
\begin{align}
	Y \coloneqq F(N_1, \dotsc, N_{d+1}), \label{eq:fcm-noise-only}
\end{align}
where $F:\mathcal{N}_1 \times \dotsm\times \mathcal{N}_{m+1} \to \mathcal{Y}$ with $\mathcal{N}_j$ being the space of values of noise variable $N_j$. In other words, the unobserved variables $N_1, \dotsc, N_{d+1}$ determine the observed output $Y$. But the role of the composite function $F$ is rather subtle as it is tied to the noise variables. We can make it more explicit through the response-function based formulation of FCMs~\citep{balke:1994:response-function}.

To motivate the response-function based formulation, let us consider a simple case where the causal graph is $X \to Y$. The structural equation of $Y$ is a stochastic function: $Y \coloneqq f(X, N)$, which reduces to a deterministic function of $X$ for a fixed value $n$ of noise $N$: $Y \coloneqq f(X, n)$. That is, if $X$ and $Y$ take values in $\calX$ and $\calY$ respectively, then noise $N$ acts as a random switch that selects different functions from $\calX$ to $\calY$. Without loss of generality, we can therefore assume that $N$ takes values in the set of functions from $\calX$ to $\calY$, denoted by $\calY^\calX$. Then we can rewrite the structural equation of $Y$ as $Y \coloneqq N(X)$. The FCM $Y \coloneqq f(X,N)$ has now turned into a probability distribution $P_N$ on the set of deterministic functions $\calY^\calX$. For example, if $X$ and $Y$ are binary, i.e., $\calX \coloneqq \{0, 1\}$ and $Y \coloneqq \{0, 1\}$, then there are four possible functions from $\calX$ to $\calY$, i.e., $\calY^\calX = \{\mathbf{0}, \mathbf{1}, \texttt{ID}, \texttt{NOT}\}$, where $\mathbf{0}$ and $\mathbf{1}$ denote constant functions that always map to 0 and 1 respectively, and \texttt{ID} and \texttt{NOT} denote identity and negation respectively. Note that as root node $X$ does not have any observed parents, its noise value selects a trivial identity function that simply maps the noise.

Generalising this idea to $d+1$ variables, we obtain the value $y$ of $Y$ (i.e., $X_{d+1}$) through a series of transformations at each causal ancestor, starting from root nodes following the causal ordering: 
\begin{align}
	y \coloneqq n_{d+1}\Big(n_{\mathrm{PA}_{Y1}}\big( \dotso \big), \dotsc, n_{\mathrm{PA}_{Ym}}\big( \dotso \big)\Big),
\end{align}
where $\mathrm{PA}_{Yi}$ is the $i$-th parent of $Y$, and lower case $n_j$ represents the value of noise $N_j$ corresponding to the observable $X_j$. That is, the value of $y$ is determined by the functions selected by noise values $n_1, \dotsc, n_{d+1}$. Since the FCM is invertible, we can uniquely recover the unobserved noise values $n_1, \dotsc, n_{d+1}$ from the observed values $x_1, \dotsc, x_{d+1}$, where $x_{d+1} \coloneqq y$. Therefore, the candidates for ``root causes'' of the output change $\Delta y \coloneqq y^{(2)} - y^{(1)}$ are the deterministic mechanisms at each node $X_j$ selected by the noise values. 

Let $R \coloneqq \{n_1, \dotsc, n_{d+1}\}$ denote the set of causes of $\Delta y$.
To attribute $\Delta y$ to its ``root causes'', we can define the contribution of a cause $r \in R$ as in \cref{sec:fine-any}: quantify how much $y$ would change if we change the noise value corresponding to the cause to its foreground value (i.e., replace the background mechanism at $r$ by the foreground mechanism) given that we have already changed background mechanisms in $A \subset R$ to foreground mechanisms:
\begin{align*}
	C(r \mid A; R) \coloneqq n_{d+1}^{\left(\mathbf{2}_{A \cup \{r\}}(n_{d+1})\right)}\Big(n_{\mathrm{PA}_{Y1}}^{\left(\mathbf{2}_{A \cup \{r\}}(n_{\mathrm{PA}_{Y1}})\right)}\big( \dotso \big), \dotsc \Big) - n_{d+1}^{\left(\mathbf{2}_{A}(n_{d+1})\right)}\Big(n_{\mathrm{PA}_{Y1}}^{\left(\mathbf{2}_{A}(\mathrm{PA}_{Y1})\right)}\big( \dotso \big), \dotsc \Big)
\end{align*}
To avoid arbitrariness introduced by the order in which we replace mechanisms, we symmetrize by taking the average of contributions over all orderings of $R$ to get the Shapley value contribution. This technique operationalises the proposal for attributing distributional changes to causal mechanisms of nodes in the causal graph~\citep{budhathoki:2021:distribution-change} to unit-level.

%

\section{Paradox 2/ Non-additivity for multiple changes when collapsing changes}
As systems may undergo change multiple times, one may wonder whether the attributions add up when we "collapse" changes. More formally, given three time points $t_1, t_2, t_3$, does the sum of the attributions for two consecutive changes (i.e., $t_1 \to t_2$ and $t_2 \to t_3$) equal the attributions for a single change obtained by collapsing the two changes (i.e., $t_1 \to t_3$)? 

Consider a mechanism $f$ connecting a single input variable $X$ to the output $Y$. Let $x^{(k)}$ denote the value of $X$ for a unit from the time point $t_k$. Likewise, define $y^{(k)}$ and $f^{(k)}$. Clearly the output changes are additive when we collapse changes, i.e., $\left(y^{(2)} - y^{(1)} \right) + \left( y^{(3)} - y^{(2)} \right) = y^{(3)} - y^{(1)}$. What about the attributions? As we have a single input variable, both \FineAny and \CoarseAny yield the same Shapley values. In particular, for the change from $t_1$ to $t_2$, the Shapley value for the mechanism $f$ is given by
\begin{align*}
	\pi^{1\to2}({f}) &\coloneqq \frac{1}{2}\left\{ f^{(2)}\big(x^{(1)}\big) - f^{(1)}\big(x^{(1)}\big) \right\} 
	+
	\frac{1}{2}\left\{ f^{(2)}\big(x^{(2)}\big) - f^{(1)}\big(x^{(2)}\big) \right\}, 
\end{align*}
and that for the change from $t_2$ to $t_3$ is given by
\begin{align*}
	\pi^{2\to3}({f}) &\coloneqq \frac{1}{2}\left\{ f^{(3)}\big(x^{(2)}\big) - f^{(2)}\big(x^{(2)}\big) \right\} 
	+
	\frac{1}{2}\left\{ f^{(3)}\big(x^{(3)}\big) - f^{(2)}\big(x^{(3)}\big) \right\}.
\end{align*}
Their sum is given by
\begin{align*}
	\pi^{1\to2}({f}) + \pi^{2\to3}({f}) &\coloneqq \frac{1}{2}\left\{ f^{(2)}\big(x^{(1)}\big) - f^{(1)}\big(x^{(1)}\big) \right\} 
	+
	\frac{1}{2}\left\{ f^{(3)}\big(x^{(2)}\big) - f^{(1)}\big(x^{(2)}\big) \right\}
	+\\&\phantom{afc} 
	\frac{1}{2}\left\{ f^{(3)}\big(x^{(3)}\big) - f^{(2)}\big(x^{(3)}\big) \right\}\\
	&\neq \pi^{1\to3}({f}),
\end{align*}
where $\pi^{1\to3}({f})$ is given by
\begin{align*}
	\pi^{1\to3}({f}) &\coloneqq \frac{1}{2}\left\{ f^{(3)}\big(x^{(1)}\big) - f^{(1)}\big(x^{(1)}\big) \right\} 
	+
	\frac{1}{2}\left\{ f^{(3)}\big(x^{(3)}\big) - f^{(1)}\big(x^{(3)}\big) \right\}.
\end{align*}
This shows that attributions are non-additive when we collapse the intermediate changes. This phenomenon can be explained by the inclusion of counterfactuals in our attribution methods. When we attribute a single change (i.e. $t_1 \to t_3$) obtained by the collapsing the intermediate changes ($t_1 \to t_2$ and $t_2 \to t_3$), we do not consider the counterfactuals involving intermediate states any more (e.g., $f^{(3)}(x^{(2)})$, $f^{(1)}(x^{(2)})$). This conclusion also holds for the Shapley values of the input $X$, following similar algebraic calculation. 

\section{Proofs}
\begin{theorem}
Both attribution methods satisfy completeness and dummy axioms.
\end{theorem}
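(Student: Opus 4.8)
The plan is to treat the two methods separately: \CoarseAny by direct algebra on its closed-form expressions, and \FineAny by reducing it to the classical axioms of the Shapley value after identifying the appropriate coalitional game. For \CoarseAny, I would first establish completeness by adding the two closed forms in Eqs.~\eqref{eq:sv-f} and \eqref{eq:sv-x}. Writing out $\pi(f) + \pi(\bfx)$, the two counterfactual terms $f^{(2)}(\bfx^{(1)})$ and $f^{(1)}(\bfx^{(2)})$ each appear once with coefficient $+\tfrac12$ and once with $-\tfrac12$ and hence cancel, while $f^{(1)}(\bfx^{(1)})$ collects total coefficient $-1$ and $f^{(2)}(\bfx^{(2)})$ collects $+1$, leaving exactly $\Delta y = f^{(2)}(\bfx^{(2)}) - f^{(1)}(\bfx^{(1)})$. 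For dummy, I would read the conditions straight off the closed forms: if $f^{(2)}(\bfx) = f^{(1)}(\bfx)$ for all $\bfx \in \calbfX$, each brace in $\pi(f)$ is identically zero; if $\bfx^{(2)} = \bfx^{(1)}$, each brace in $\pi(\bfx)$ reduces to $f(\bfx^{(1)}) - f(\bfx^{(1)}) = 0$ under the respective fixed mechanism. This disposes of \CoarseAny by inspection.

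For \FineAny, the cleaner route is to recognize Eq.~\eqref{eq:shapley-fine} as the Shapley value of the coalitional game whose characteristic function is $\nu(A) \coloneqq f^{(\mathbf{2}_A(f))}\big(x_1^{(\mathbf{2}_A(x_1))}, \dotsc\big)$ on subsets $A \subseteq V$, since the contribution $C(w \mid A; V)$ is exactly the marginal increment $\nu(A \cup \{w\}) - \nu(A)$. Under this identification, completeness is precisely the efficiency axiom, $\sum_{w \in V} \varphi(w) = \nu(V) - \nu(\emptyset)$; here $\nu(V) = f^{(2)}(\bfx^{(2)}) = y^{(2)}$ and $\nu(\emptyset) = f^{(1)}(\bfx^{(1)}) = y^{(1)}$, so the sum is $\Delta y$. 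Dummy follows from the null-player property: if a cause $w$ does not change value---either $f^{(2)} \equiv f^{(1)}$ for the mechanism, or $x_j^{(2)} = x_j^{(1)}$ for an input variable---then switching $w$ from its background to its foreground index leaves $\nu$ invariant, so $C(w \mid A; V) = 0$ for every $A$, forcing $\varphi(w) = 0$.

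The main obstacle is not the algebra but making the game-theoretic reduction for \FineAny airtight. I must check that $\nu$ is well defined, i.e. that the indicator $\mathbf{2}_A$ assigns each cause a single foreground/background index depending only on membership in $A$ and not on any ordering, and that $C(w \mid A; V)$ genuinely coincides with the marginal contribution in this game, so that the textbook efficiency and null-player properties apply verbatim. Once that identification is verified, both axioms for \FineAny are immediate corollaries, and the already-computed \CoarseAny case can be recovered as the special instance $|V| = 2$, giving a uniform justification for both methods.
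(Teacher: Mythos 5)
Your proposal is correct and matches the paper's proof in substance: the paper likewise derives completeness from the efficiency/telescoping property of Shapley values (the worth moves from $f^{(1)}\big(\bfx^{(1)}\big)$ for the empty coalition to $f^{(2)}\big(\bfx^{(2)}\big)$ for the grand coalition, so attributions sum to $\Delta y$), and proves dummy by showing $C(w \mid A) = 0$ for \emph{every} subset $A$, which is exactly your null-player argument. The only difference is packaging---you dispatch \CoarseAny by direct algebra on the closed forms and make the characteristic function $\nu$ explicit for \FineAny, whereas the paper carries out the indicator-function computation by hand for both methods---but the underlying ideas are the same.
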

\begin{proof}
	The completeness property follows directly from the property of Shapley values~\citep{shapley:1953:solution}. That is, the Shapley values of players in a ``coalition game'' sum up to the value of the game. In our case, the players are the causal drivers of $\Delta y$. In all the attribution techniques, we start with background values only (e.g., $f^{(1)}(\bfx^{(1)})$ or $f^{(1)}(x_1^{(1)}, \dotsc, x_d^{(1)})$)---the worth of an empty coalition of players. As we gradually replace the causal drivers to their foreground values, we end up with foreground values only (e.g., $f^{(2)}(\bfx^{(2)})$ or or $f^{(2)}(x_1^{(2)}, \dotsc, x_d^{(2)})$)---the worth of the grand coalition where all players join. Therefore, the players add a total value of $f^{(2)}(\bfx^{(2)})-f^{(1)}(\bfx^{(1)}) = \Delta y$.
	
	We show the dummy property for the coarse-grained attribution first. The causal drivers of $\Delta y$ are in the set $W \coloneqq \{f, \bfx\}$. Let us recap the contribution of a causal driver $w$ given that we have replaced the causal drivers in the subset $A \subset W$ to their foreground values:
	\begin{align*}
		C(w \mid A; W) &\coloneqq f^{\left(\mathbf{2}_{A \cup \{w\}}(f)\right)} \left(\bfx^{\left(\mathbf{2}_{A \cup \{w\}}(\bfx)\right)}\right) - 
		f^{\left(\mathbf{2}_{A}(f)\right)} \left(\bfx^{\left(\mathbf{2}_{A}(\bfx)\right)}\right).
	\end{align*}
	Suppose that the mechanism $f$ does not change, i.e. $f^{(2)}(\bfx) = f^{(1)}(\bfx)$ for all $\bfx \in \calbfX$. Then the contribution of the mechanism $f$ to $\Delta y$ given \emph{any} subset $A$ reduces to
	\begin{align*}
		C(f \mid A; W) 
		&= f^{\left(\mathbf{2}_{A \cup \{f\}}(f)\right)} \left(\bfx^{\left(\mathbf{2}_{A \cup \{f\}}(\bfx)\right)}\right) - f^{\left(\mathbf{2}_{A}(f)\right)} \left(\bfx^{\left(\mathbf{2}_{A}(\bfx)\right)}\right)\\
		&= f^{(2)} \left(\bfx^{\left(\mathbf{2}_{A \cup \{f\}}(\bfx)\right)}\right) - 
		f^{(1)} \left(\bfx^{\left(\mathbf{2}_{A}(\bfx)\right)}\right)\\
		&= f^{(2)} \left(\bfx^{\left(\mathbf{2}_{A}(\bfx)\right)}\right) - 
		f^{(1)} \left(\bfx^{\left(\mathbf{2}_{A}(\bfx)\right)}\right)\\
		&= f^{(1)} \left(\bfx^{\left(\mathbf{2}_{A}(\bfx)\right)}\right) - 
		f^{(1)} \left(\bfx^{\left(\mathbf{2}_{A}(\bfx)\right)}\right)= 0,
	\end{align*}
	where indicator functions $\mathbf{2}_{A \cup \{f\}}(\bfx) = \mathbf{2}_{A}(\bfx)$ and $\mathbf{2}_{A}(f) = 1$ as any subset $A$ does not contain $f$; $A$ is either an empty set $\emptyset$ or $\{\bfx\}$. That is, the contribution of the mechanism $f$ to $\Delta y$ is zero regardless of the ordering $A$ in which we make replacements. As a result, the Shapley value of the mechanism $f$, which is an average of contributions over all orderings $A$, is also zero.
	
	Now suppose that the input vector does not change, i.e. $\bfx^{(2)} = \bfx^{(1)}$. Then the contribution of the inputs $\bfX$ to $\Delta y$ given \emph{any} subset $A$ reduces to
	\begin{align*}
		C(\bfx \mid A; W) 
		&= f^{\left(\mathbf{2}_{A \cup \{\bfx\}}(f)\right)} \left(\bfx^{\left(\mathbf{2}_{A \cup \{\bfx\}}(\bfx)\right)}\right) - f^{\left(\mathbf{2}_{A}(f)\right)} \left(\bfx^{\left(\mathbf{2}_{A}(\bfx)\right)}\right)\\
		&= f^{\left(\mathbf{2}_{A \cup \{\bfx\}}(f)\right)} \left(\bfx^{(2)}\right) - f^{\left(\mathbf{2}_{A}(f)\right)} \left(\bfx^{(1)}\right)\\
		&= f^{\left(\mathbf{2}_{A}(f)\right)} \left(\bfx^{(2)}\right) - f^{\left(\mathbf{2}_{A}(f)\right)} \left(\bfx^{(1)}\right)\\
		&= f^{\left(\mathbf{2}_{A}(f)\right)} \left(\bfx^{(1)}\right) - f^{\left(\mathbf{2}_{A}(f)\right)} \left(\bfx^{(1)}\right)=0,
	\end{align*}
	where indicator functions $\mathbf{2}_{A \cup \{\bfx\}}(f) = \mathbf{2}_{A}(f)$ and $\mathbf{2}_{A}(\bfx) = 1$ as any subset $A$ does not contain $\bfx$; $A$ is either an empty set $\emptyset$ or $\{f\}$. That is, the contribution of the input $\bfx$ to $\Delta y$ is zero regardless of the ordering $A$ in which we make replacements. As a result, the Shapley value of the input $\bfx$, which is an average of contributions over all orderings $A$, is also zero.
	
	For the fine-grained attribution, we can follow similar arguments. But for the sake of completeness, we show the proof regardless. The causal drivers of $\Delta y$ are in the set $V \coloneqq \{f, x_1, \dotsc, x_d\}$. Let us recap the contribution of a causal driver $w$ given that we have replaced the causal drivers in the subset $A \subset V$ to their foreground values:
	\begin{align*}
		C(w \mid A; V) \coloneqq 
		f^{\left(\mathbf{2}_{A \cup \{w\}}(f)\right)} \left(x_1^{\left(\mathbf{2}_{A \cup \{w\}}(x_1)\right)}, \dotsc, x_d^{\left(\mathbf{2}_{A \cup \{w\}}(x_d)\right)}\right) -
		f^{\left(\mathbf{2}_{A}(f)\right)} \left(x_1^{\left(\mathbf{2}_{A}(x_1)\right)}, \dotsc, x_d^{\left(\mathbf{2}_{A}(x_d)\right)}\right).
	\end{align*}
	Suppose that the mechanism $f$ does not change, i.e. $f^{(2)}(\bfx) = f^{(1)}(\bfx)$ for all $\bfx \in \calbfX$. Then the contribution of the mechanism $f$ to $\Delta y$ given \emph{any} subset $A$ reduces to
	\begin{align*}
		C(f \mid A; V) 
		&= f^{\left(\mathbf{2}_{A \cup \{f\}}(f)\right)} \left(x_1^{\left(\mathbf{2}_{A \cup \{f\}}(x_1)\right)}, \dotsc, x_d^{\left(\mathbf{2}_{A \cup \{f\}}(x_d)\right)}\right) - f^{\left(\mathbf{2}_{A}(f)\right)} \left(x_1^{\left(\mathbf{2}_{A}(x_1)\right)}, \dotsc, x_d^{\left(\mathbf{2}_{A}(x_d)\right)}\right)\\
		&= f^{(2)} \left(x_1^{\left(\mathbf{2}_{A \cup \{f\}}(x_1)\right)}, \dotsc, x_d^{\left(\mathbf{2}_{A \cup \{f\}}(x_d)\right)}\right) - f^{(1)} \left(x_1^{\left(\mathbf{2}_{A}(x_1)\right)}, \dotsc, x_d^{\left(\mathbf{2}_{A}(x_d)\right)}\right)\\
		&= f^{(2)} \left(x_1^{\left(\mathbf{2}_{A}(x_1)\right)}, \dotsc, x_d^{\left(\mathbf{2}_{A}(x_d)\right)}\right) - f^{(1)} \left(x_1^{\left(\mathbf{2}_{A}(x_1)\right)}, \dotsc, x_d^{\left(\mathbf{2}_{A}(x_d)\right)}\right)\\
		&= f^{(1)} \left(x_1^{\left(\mathbf{2}_{A}(x_1)\right)}, \dotsc, x_d^{\left(\mathbf{2}_{A}(x_d)\right)}\right) - f^{(1)} \left(x_1^{\left(\mathbf{2}_{A}(x_1)\right)}, \dotsc, x_d^{\left(\mathbf{2}_{A}(x_d)\right)}\right)=0,
	\end{align*}
	where indicator functions $x_1^{\left(\mathbf{2}_{A \cup \{f\}}(x_1)\right)} = x_1^{\left(\mathbf{2}_{A}(x_1)\right)}$ and $\mathbf{2}_{A}(f)=1$ as any subset $A$ does not contain $f$; $A$ is an element of the powerset $\mathcal{P}({\{x_1, \dotsc, x_d\}})$. That is, the contribution of the mechanism $f$ to $\Delta y$ is zero regardless of the ordering $A$ in which we make replacements. As a result, the Shapley value of the mechanism $f$, which is an average of contributions over all orderings $A$, is also zero.
	
	Now suppose that the value of an input variable $X_i$ does not change, i.e. $x_i^{(2)} = x_i^{(1)}$. Then the contribution of the input variable $X_i$ to $\Delta y$ given \emph{any} subset $A$ reduces to
	\begin{align*}
		C(x_i \mid A; V) 
		&= f^{\left(\mathbf{2}_{A \cup \{x_i\}}(f)\right)} \left(x_1^{\left(\mathbf{2}_{A \cup \{x_i\}}(x_1)\right)}, \dotsc, x_i^{\left(\mathbf{2}_{A \cup \{x_i\}}(x_i)\right)}, \dotsc, x_d^{\left(\mathbf{2}_{A \cup \{x_i\}}(x_d)\right)}\right) -\\ &\phantom{asd}f^{\left(\mathbf{2}_{A}(f)\right)} \left(x_1^{\left(\mathbf{2}_{A}(x_1)\right)}, \dotsc, x_j^{\left(\mathbf{2}_{A}(x_j)\right)}, \dotsc,  x_d^{\left(\mathbf{2}_{A}(x_d)\right)}\right)\\
		&= f^{\left(\mathbf{2}_{A}(f)\right)} \left(x_1^{\left(\mathbf{2}_{A}(x_1)\right)}, \dotsc, x_i^{(2)}, \dotsc, x_d^{\left(\mathbf{2}_{A}(x_d)\right)}\right) - f^{\left(\mathbf{2}_{A}(f)\right)} \left(x_1^{\left(\mathbf{2}_{A}(x_1)\right)}, \dotsc, x_i^{(1)}, \dotsc,  x_d^{\left(\mathbf{2}_{A}(x_d)\right)}\right)\\ 
		&= f^{\left(\mathbf{2}_{A}(f)\right)} \left(x_1^{\left(\mathbf{2}_{A}(x_1)\right)}, \dotsc, x_i^{(1)}, \dotsc, x_d^{\left(\mathbf{2}_{A}(x_d)\right)}\right) - f^{\left(\mathbf{2}_{A}(f)\right)} \left(x_1^{\left(\mathbf{2}_{A}(x_1)\right)}, \dotsc, x_i^{(1)}, \dotsc,  x_d^{\left(\mathbf{2}_{A}(x_d)\right)}\right)\\ 
		&= 0,
	\end{align*}
	where indicators functions $\mathbf{2}_{A \cup \{x_i\}}(f)=\mathbf{2}_{A}(f)$, $\mathbf{2}_{A \cup \{x_i\}}(x_j) = \mathbf{2}_{A}(x_j)$, and $\mathbf{2}_{A}(x_i)=1$ as any subset $A$ does not contain $x_i$; $A$ is an element of the powerset $\mathcal{P}({\{f, x_1, \dotsc, x_d\} \setminus \{x_i\}})$. That is, the contribution of the input variable $X_j$ to $\Delta y$ is zero regardless of the ordering $A$ in which we make replacements. As a result, the Shapley value of the input $X_i$, which is an average of contributions over all orderings $A$, is also zero.
\end{proof}

\begin{theorem}
		When the mechanisms are linear, i.e., $f^{(k)}(\bfx^{(k)}) \coloneqq \sum_{j=1}^d \beta_j^{(k)} x_j^{(k)}, \ k=1,2$, the Shapley value attributions of both methods are the same, i.e., $\varphi(w) = \pi(w)$ for $w \in \{f, x_1, \dotsc, x_d\}$.
\end{theorem}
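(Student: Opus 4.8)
The plan is to evaluate the fine-grained Shapley values $\varphi(w)$ from Eq.~\eqref{eq:shapley-fine} under the linear assumption and check that they reproduce the closed forms for $\pi(f)$ and $\pi(x_j)$ obtained in \cref{sec:fine-linear}. The whole argument hinges on one simplification that I would establish first: linearity collapses each marginal contribution $C(w \mid A; V)$ to a quantity that depends on the coalition $A$ in a very restricted way. For an input variable $x_i$, note that $x_i \notin A$, so passing from $A$ to $A \cup \{x_i\}$ flips only the $i$-th coordinate from background to foreground and leaves the mechanism index $\mathbf{2}_A(f)$ and all other coordinates fixed; by linearity every unchanged term cancels and $C(x_i \mid A; V) = \beta_i^{(\mathbf{2}_A(f))}(x_i^{(2)} - x_i^{(1)})$. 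Hence this contribution depends on $A$ only through whether $f \in A$, carrying $\beta_i^{(1)}$ if $f \notin A$ and $\beta_i^{(2)}$ if $f \in A$. Symmetrically, for the mechanism $f$ (again $f \notin A$) only the mechanism index changes while the input vector $\bfx_A$---whose $j$-th entry is $x_j^{(2)}$ if $x_j \in A$ and $x_j^{(1)}$ otherwise---is held fixed, giving $C(f \mid A; V) = \sum_{j=1}^d (\beta_j^{(2)} - \beta_j^{(1)})(\bfx_A)_j$.

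Next I would carry out the averaging, for which the cleanest route is to pass to the equivalent permutation-average form of the Shapley value (the symmetrization over all orderings already used for the coarse-grained method). For $\varphi(x_i)$ the contribution distinguishes only whether $f$ precedes $x_i$ in the ordering; since $f$ comes before $x_i$ with probability $\tfrac12$ in a uniformly random ordering of the $d+1$ players, the average is $\tfrac12(\beta_i^{(1)} + \beta_i^{(2)})(x_i^{(2)} - x_i^{(1)})$, which is exactly $\pi(x_i)$. For $\varphi(f)$, linearity of expectation lets me treat each coordinate separately: coordinate $j$ contributes $(\beta_j^{(2)} - \beta_j^{(1)})$ times the expected value of $(\bfx_A)_j$, and because $x_j$ precedes $f$ with probability $\tfrac12$ this expectation equals $\tfrac12(x_j^{(1)} + x_j^{(2)})$; summing over $j$ yields precisely $\pi(f)$.

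As an alternative to the probabilistic step, I could instead sum directly against the subset weights $\tfrac{1}{(d+1)\binom{d}{|A|}}$ by splitting each sum according to the single relevant membership bit and invoking the two identities $\binom{d-1}{a}/\binom{d}{a} = (d-a)/d$ and $\binom{d-1}{a}/\binom{d}{a+1} = (a+1)/d$; summed over $a = 0,\dots,d-1$ each gives $\tfrac{d+1}{2}$, so after cancelling the $\tfrac{1}{d+1}$ prefactor both the ``$f \in A$'' and ``$f \notin A$'' blocks carry total weight $\tfrac12$, matching the permutation count.

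The only conceptually substantive step is the first one: recognizing that linearity reduces each marginal contribution to a per-coordinate quantity governed by at most one ``before/after $f$'' bit. Once that structure is exposed, the remaining averaging is either a one-line symmetry argument or a routine binomial computation, and the main risk is purely clerical---keeping the subset and weight bookkeeping straight---which the permutation viewpoint avoids altogether.
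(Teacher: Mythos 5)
Your proof is correct and takes essentially the same route as the paper's: linearity collapses each marginal contribution to a per-coordinate quantity, $C(x_i \mid A; V) = \beta_i^{(\mathbf{2}_A(f))}\bigl(x_i^{(2)}-x_i^{(1)}\bigr)$ and $C(f \mid A; V) = \sum_{j} \bigl(\beta_j^{(2)}-\beta_j^{(1)}\bigr) x_j^{(\mathbf{2}_A(x_j))}$, so that each contribution depends on $A$ only through a single membership bit, and averaging then assigns weight $\tfrac12$ to each case, recovering $\pi(x_i)$ and $\pi(f)$. If anything, your justification of the $\tfrac12$ weight---via the probability that $f$ precedes $x_i$ in a uniformly random ordering of the $d+1$ players, or equivalently the explicit binomial identities summing the subset weights $\tfrac{1}{(d+1)\binom{d}{|A|}}$---is more careful than the paper's informal remark that half of the subsets contain $f$, since the Shapley subset weights are not uniform and the paper's pairing argument alone does not literally establish the $\tfrac12$; your permutation and binomial computations close that small gap.
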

\begin{proof}
	Consider linear mechanisms of the form:
	\begin{align*}
		f^{(k)}(\bfx^{(k)}) \coloneqq \sum_{j=1}^d \beta_j^{(k)} x_j^{(k)}, \ k=1,2,
	\end{align*}
	where $\beta_j^{(k)}$'s are the linear coefficients. The causal drivers are in the set $V \coloneqq \{f, x_1, \dotsc, x_d\}$.
	Let us write down the contribution of the input variable $X_i$ given a subset $A \subset V$ with \FineAny assuming the aforementioned linear mechanisms:
	\begin{align*}
		C(x_i \mid A; V) 
		&= f^{\left(\mathbf{2}_{A \cup \{x_i\}}(f)\right)} \left(x_1^{\left(\mathbf{2}_{A \cup \{x_i\}}(x_1)\right)}, \dotsc, x_i^{\left(\mathbf{2}_{A \cup \{x_i\}}(x_i)\right)}, \dotsc, x_d^{\left(\mathbf{2}_{A \cup \{x_i\}}(x_d)\right)}\right) -\\ &\phantom{asd}f^{\left(\mathbf{2}_{A}(f)\right)} \left(x_1^{\left(\mathbf{2}_{A}(x_1)\right)}, \dotsc, x_i^{\left(\mathbf{2}_{A}(x_i)\right)}, \dotsc,  x_d^{\left(\mathbf{2}_{A}(x_d)\right)}\right)\\
		&= f^{\left(\mathbf{2}_{A}(f)\right)} \left(x_1^{\left(\mathbf{2}_{A}(x_1)\right)}, \dotsc, x_i^{(2)}, \dotsc, x_d^{\left(\mathbf{2}_{A}(x_d)\right)}\right) - f^{\left(\mathbf{2}_{A}(f)\right)} \left(x_1^{\left(\mathbf{2}_{A}(x_1)\right)}, \dotsc, x_i^{(1)}, \dotsc,  x_d^{\left(\mathbf{2}_{A}(x_d)\right)}\right)\\ 
		&= \beta_i^{\left(\mathbf{2}_{A}(f)\right)} x_i^{(2)} + \sum_{\substack{j=1\\j\neq i}}^d \beta_j^{\left(\mathbf{2}_{A}(f)\right)} x_j^{\left(\mathbf{2}_{A}(x_j)\right)} - \beta_i^{\left(\mathbf{2}_{A}(f)\right)} x_i^{(1)} - \sum_{\substack{j=1\\j\neq i}}^d \beta_j^{\left(\mathbf{2}_{A}(f)\right)} x_j^{\left(\mathbf{2}_{A}(x_j)\right)}\\
		&= \beta_i^{\left(\mathbf{2}_{A}(f)\right)} \left( x_i^{(2)} -  x_i^{(1)} \right).
	\end{align*}
	In particular, the coefficient $\beta_i^{\left(\mathbf{2}_{A}(f)\right)}$ can assume either a foreground or a background value depending on whether $A$ contains $f$. As such, we have
	\begin{align*}
		C(x_i \mid A; V) = \begin{cases}
			\beta_i^{(2)} \left( x_i^{(2)} -  x_i^{(1)} \right) & \text{ when $f \in A$}\\
			\beta_i^{(1)} \left( x_i^{(2)} -  x_i^{(1)} \right) & \text{ when $f \notin A$}.
		\end{cases}
	\end{align*}
	Note that $A$ is an element of the powerset $\mathcal{P}(\{f, x_1, \dots, x_d\} \setminus \{x_i\})$ which contains $2^{d}$ elements. Each set $A$ in the powerset containing $f$ can be paired with another set that does not contain $f$. Therefore, half of the sets $A$ in the powerset contain $f$, half do not. Thus the average contribution of $X_i$, over all subsets $A$, which is also its Shapley value, is given by 
	\begin{align*}
		\varphi(x_i) &= \frac{1}{2} \beta_i^{(2)} \left( x_i^{(2)} -  x_i^{(1)} \right) + \frac{1}{2} \beta_i^{(1)} \left( x_i^{(2)} -  x_i^{(1)} \right)
		= \frac{\beta_i^{(1)}+\beta_i^{(2)}}{2} \left( x_i^{(2)} -  x_i^{(1)} \right) 
		= \pi(x_i).
	\end{align*}
	
	Similarly the contribution of the mechanism $f$ given a subset $A \subset V$ with \FineAny assuming the aforementioned linear mechanisms is given by:
	\begin{align*}
		C(f \mid A; V) 
		&= f^{\left(\mathbf{2}_{A \cup \{f\}}(f)\right)} \left(x_1^{\left(\mathbf{2}_{A \cup \{f\}}(x_1)\right)}, \dotsc, x_d^{\left(\mathbf{2}_{A \cup \{f\}}(x_d)\right)}\right) - f^{\left(\mathbf{2}_{A}(f)\right)} \left(x_1^{\left(\mathbf{2}_{A}(x_1)\right)}, \dotsc, x_d^{\left(\mathbf{2}_{A}(x_d)\right)}\right)\\
		&= f^{(2)} \left(x_1^{\left(\mathbf{2}_{A}(x_1)\right)}, \dotsc, x_d^{\left(\mathbf{2}_{A}(x_d)\right)}\right) - f^{(1)} \left(x_1^{\left(\mathbf{2}_{A}(x_1)\right)}, \dotsc, x_d^{\left(\mathbf{2}_{A}(x_d)\right)}\right)\\
		&= \sum_{j=1}^d \beta_j^{(2)} x_j^{(\mathbf{2}_{A}(x_j))} - \sum_{j=1}^d \beta_j^{(1)} x_j^{\left(\mathbf{2}_{A}(x_j)\right)}\\
		&= \sum_{j=1}^d x_j^{\left(\mathbf{2}_{A}(x_1)\right)} \left(\beta_j^{(2)} - \beta_j^{(1)} \right)
	\end{align*}
	Following a similar argument as before, half of the sets $A$ in the powerset $\mathcal{P}(\{f, x_1, \dots, x_d\} \setminus \{f\})$ contain $x_i$, half do not. Using the linearity of expectation property, the average contribution of $f$, over all subsets $A$, which is also its Shapley value, is given by 
	\begin{align*}
		\varphi(f) &= \sum_{j=1}^d \frac{1}{2} x_j^{(1)} \left(\beta_j^{(2)} - \beta_j^{(1)} \right) + \sum_{j=1}^d \frac{1}{2} x_j^{(2)} \left(\beta_j^{(2)} - \beta_j^{(1)} \right) = \sum_{j=1}^d \frac{\left( x_j^{(1)} + x_j^{(2)} \right)}{2} \left(\beta_j^{(2)} - \beta_j^{(1)} \right)\\
		&= \pi(f)
	\end{align*}
\end{proof}

\end{document}